%% file: iclr2026_conference.tex
\documentclass{article} 
\usepackage{iclr2026_conference,times}

\usepackage{hyperref}
\usepackage{url}
\usepackage{amsmath}
\usepackage{amssymb}
\usepackage{mathtools}
\usepackage{amsthm}

\usepackage{graphicx}
\usepackage{subfigure}
\usepackage{arydshln}
\usepackage{enumerate}

\input{math.tex}

\usepackage[capitalize,noabbrev]{cleveref}
\usepackage{wrapfig,lipsum,booktabs}
\usepackage[most]{tcolorbox}
\usepackage{xcolor}

\newtheorem{lemma}{Lemma}
\newtheorem{definition}{Definition}
\newtheorem{proposition}{Proposition}
\newtheorem{thm}{Theorem}
\newtheorem{assum}{Assumption}

\usepackage[textsize=tiny]{todonotes}

\newcommand{\bfsection}[1]{\noindent\textbf{#1}}
\def\HS{\hspace{\fontdimen2\font}}

\newtcolorbox{examplebox}{
    colback=gray!12,      
    colframe=gray!12,     
    boxrule=0pt,
    arc=0pt,              
    left=4pt,
    right=4pt,
    top=4pt,
    bottom=4pt,
    boxsep=0pt,
    before skip=6pt,
    after skip=6pt,
    breakable
}

\title{Towards Identifiable Representations under Misspecified Structure}

\author{
Yuke Li$^{1}$\thanks{Equal contribution.}, Yujia Zheng$^{2 *}$, Ziyi Chen$^{1}$, Kun Zhang$^{2}$, Heng Huang$^{1}$\\
$^{1}$ University of Maryland College Park, College Park, MD, USA
\\
$^{2}$ Carnegie Mellon University, Pittsburgh PA, USA \\
}

\iclrfinalcopy 

\begin{document}

\maketitle

\begin{abstract}

The presence of noise that depends on the latent variables poses a fundamental challenge to identifiability. Existing results rely on conditional independence among the observations given the latent variables. 
We study a more general \emph{misspecified structure}, where this conditional factorization does not hold, and establish both precise and approximate identifiability guarantees.
We characterize structural misspecification as a perturbed factor analysis problem.
For precise identifiability, we establish subspace identifiability under spectral separation and controlled perturbation, followed by component-wise identifiability under structural sparsity. 
When the precise condition is not guaranteed, we derive an approximate subspace-identifiability theorem.
Based on these results, we develop an unsupervised variational estimator for recovering latent variables. Experiments demonstrate the effectiveness of the proposed framework.

\end{abstract}

\input{1_intro}

\input{2_setup}

\input{3_disident_neurips25}

\input{4_variden}
\input{5_est}

\input{6_exp}



\section*{AI Use Statement}

In this work, we used generative AI tools solely to assist with language editing, including improving grammar, clarity, conciseness, and overall wording of the manuscript. Generative AI tools were not used to generate the research ideas, theoretical results, experimental design, or scientific conclusions. All AI-assisted text was reviewed and revised by the authors, who take full responsibility for the final content of this work.

\newpage
\bibliography{ref}
\bibliographystyle{iclr2026_conference}

\appendix

\input{7_appendix}


\end{document}

%% file: math.tex
\usepackage{amsmath,amsfonts,bm}

\def\eqref#1{equation~\ref{#1}}

\def\1{\bm{1}}

\def\rva{{\mathbf{a}}}
\def\rvb{{\mathbf{b}}}
\def\rvc{{\mathbf{c}}}
\def\rvd{{\mathbf{d}}}

\def\rvn{{\mathbf{n}}}

\def\rvu{{\mathbf{u}}}
\def\rvv{{\mathbf{v}}}
\def\rvw{{\mathbf{w}}}
\def\rvx{{\mathbf{x}}}
\def\rvy{{\mathbf{y}}}
\def\rvz{{\mathbf{z}}}

\DeclareMathAlphabet{\mathsfit}{\encodingdefault}{\sfdefault}{m}{sl}
\SetMathAlphabet{\mathsfit}{bold}{\encodingdefault}{\sfdefault}{bx}{n}



%% file: 1_intro.tex
\section{Introduction}\label{sec:intro}

Learning meaningful representations from the data generating process with structured noise remains an open challenge. Several works treats the noise terms as known auxiliaries \citep{lachapelle2024additive,cauca_neurips23,sparsecausal_icml23,cauca_neurips23,causal_uai23,zheng2022identifiability,yaomulti,lachapelle2024nonparametric,li2025identification,song2024temporally,rajendran2024from,sparse_icml24,brady2025interaction}.
One might argue that noise term can simply be absorbed into an expanded latent space.
The approaches of \citep{kugelgen2021selfsupervised,causal_icml22,causal_iclr23,timilsina2026contentstyle,xu2026identifying} pursue this route but require partitioning them into invariant / variant components across environments.
\citet{sun2025causal} assumes independence between noise and latent variables, while \citep{kong2023identification} requires that the relations between a pair of latent variables have to be sufficiently distinct.

When the noise itself depends on the latent variables, which we refer to as \emph{dependent noise}, establishing identifiability becomes substantially more challenging. Prior work \citep{hu2008instrumental,zheng2025nonparametric} addresses this setting under the assumption that the observations become conditionally independent given the latent variables. A few recent works extends this assumption to temporal data \citep{li2026towards,li2026online,fu2026learning}. However, this assumption can be violated in realistic systems. 
As illustrated in Fig.~\ref{fig:mis}, in chest X-ray diagnosis, the latent variable corresponds to the patient's underlying lung-cancer state, the observation is the chest X-ray image, and the noise can be represented by the patient's inspiration level.
The inspiration level may itself depend on the disease state, while both the disease state and the inspiration level affect multiple regions of the observed image. Consequently, different image regions can remain statistically dependent even after conditioning on the lung-cancer state. We refer to this setting as {\bf\emph{misspecified structure}}, and study how to recover the latent variables under such misspecification.

We address this challenge by characterizing structural misspecification as a perturbed factor analysis problem. Equation~\ref{eq:bound} quantifies the resulting perturbation and provides the basis for two complementary identifiability regimes. First, when the perturbation is sufficiently small relative to the spectral separation, Theorem~\ref{thm:submanifold_iden} establishes precise subspace identifiability, while Proposition~\ref{thm:variabe_iden} further reduces the remaining ambiguity to component-wise identifiability under structural sparsity. Second, when the sufficient condition for precise identifiability is no longer guaranteed, Theorem~\ref{thm:approx_vareps} establishes approximate subspace identifiability by bounding the identification error between the recovered representation and its admissible equivalence class. 
These theoretical contributions yields, to the best of our knowledge, one of the first general frameworks for uncovering latent variables in misspecified structure with proper identifiability guarantees.

Leveraging these theoretical insights, we propose an unsupervised method, which utilizes a variational inference-based learning objective specifically designed to uncover latent variables. 
The same learning objective is used in both the precise and approximate regimes. Experiments on synthetic and real-world data demonstrate improved latent recovery under a misspecified structure. In particular, we evaluate precise component-wise identification and approximate subspace identification separately, supporting the two theoretical regimes developed in this work.

%% file: 2_setup.tex
\section{Problem Setting}\label{sec:setup}



Let $\rvx\in\mathbb{R}^K$ denote the $K$ dimensional observation, $\rvz\in\mathbb{R}^N$ denote the latent variable.
Our data generating process is formulated by:
\begin{align}\label{eq:dgp}
    \rvx = g(\rvz,\epsilon), 
    \quad
    \epsilon = e(\rvz,\eta)
\end{align}
We assume 
$g$ to be nonlinear, nonparametric and smooth functions. 
Importantly, $e$ denotes another nonlinear, nonparametric and smooth function, allowing the $\epsilon$ to be dependent on $\rvz$. 
$\eta$ denotes an exogenous variable sampled from $\mathcal{N}(0,I)$. 

Our primary goal is to identify latent variables $\rvz$ from observed data $\rvx$. 
Suppose there exists an invertible and differentiable transformation $h$, we further define subspace identifiability and component-wise identifiability as follows:
\begin{definition}\label{def:subspace_iden}
    \bfsection{(Subspace Identifiability)}
    The mapping $h$ is said to achieve subspace identifiability if it is invertible and the following transformation holds: $\hat{\rvz} = h(\rvz)$, where $\hat{\rvz}$ denotes the estimation of $\rvz$.
\end{definition}

\begin{definition}\label{def:component_iden}
    \bfsection{(Component-wise Identifiability)}
    For an individual component of the latent variable $\rvz^n$ $(\rvn\in[1,N])$, there exists a unique component $\hat{n}$ $(\hat{n}\in [1,N])$ of $\hat{\rvz}$ matches $\rvz^n$ up to a permutation $\pi$, such that $\hat{\rvz}^{\hat{n}}=h^{\hat{n}}(\rvz^{\pi(n)})$.
    Then $\rvz^n$ is component-wise identifiable.
\end{definition}

In this work, 
we breakdown the identifiability problem in two steps. First, we provide our findings on subspace identifiability in Section~\ref{sec:preciseiden} and ~\ref{sec:error_bound}. Subsequently, we present the component-wise identifiability result in Section~\ref{sec:variden}.

\bfsection{Remark on Misspecified Structure}

\input{figs/fig_intro}

Our main challenge arises from relaxing the conditional-independence structure commonly used in prior identification results \citep{hu2008instrumental,zheng2025nonparametric,fu2026learning,
li2026online,li2026towards}.
Following their multi-partition approach, we partition the observation as three disjoint blocks $\rvx=\{\rvx_\rva,\rvx_\rvb,\rvx_\rvc\}$ and the structural misspecification by:

\begin{definition}\label{def:misspeci}\bfsection{(Misspecified Structure)}
    The conditional structure is misspecified when the three observed blocks do not factorize conditionally on $\rvz$, i.e., $p(\rvx|\rvz) \neq p(\rvx_\rva|\rvz)p(\rvx_\rvb|\rvz)p(\rvx_\rvc|\rvz)$
\end{definition}

Figure~\ref{fig:mis} illustrates the misspecified structure using the lung-cancer example introduced in Section~\ref{sec:intro}. 
The existing results based on conditional independence rely on a factorization of the observable distribution that no longer holds under Definition~\ref{def:misspeci}. 
Section~\ref{sec:disiden} develops identifiability guarantees that remain valid under such misspecifications.

%% file: figs/fig_intro.tex
\begin{wrapfigure}{r}{5.5cm}
\vspace{-40px}
\includegraphics[width=\linewidth]{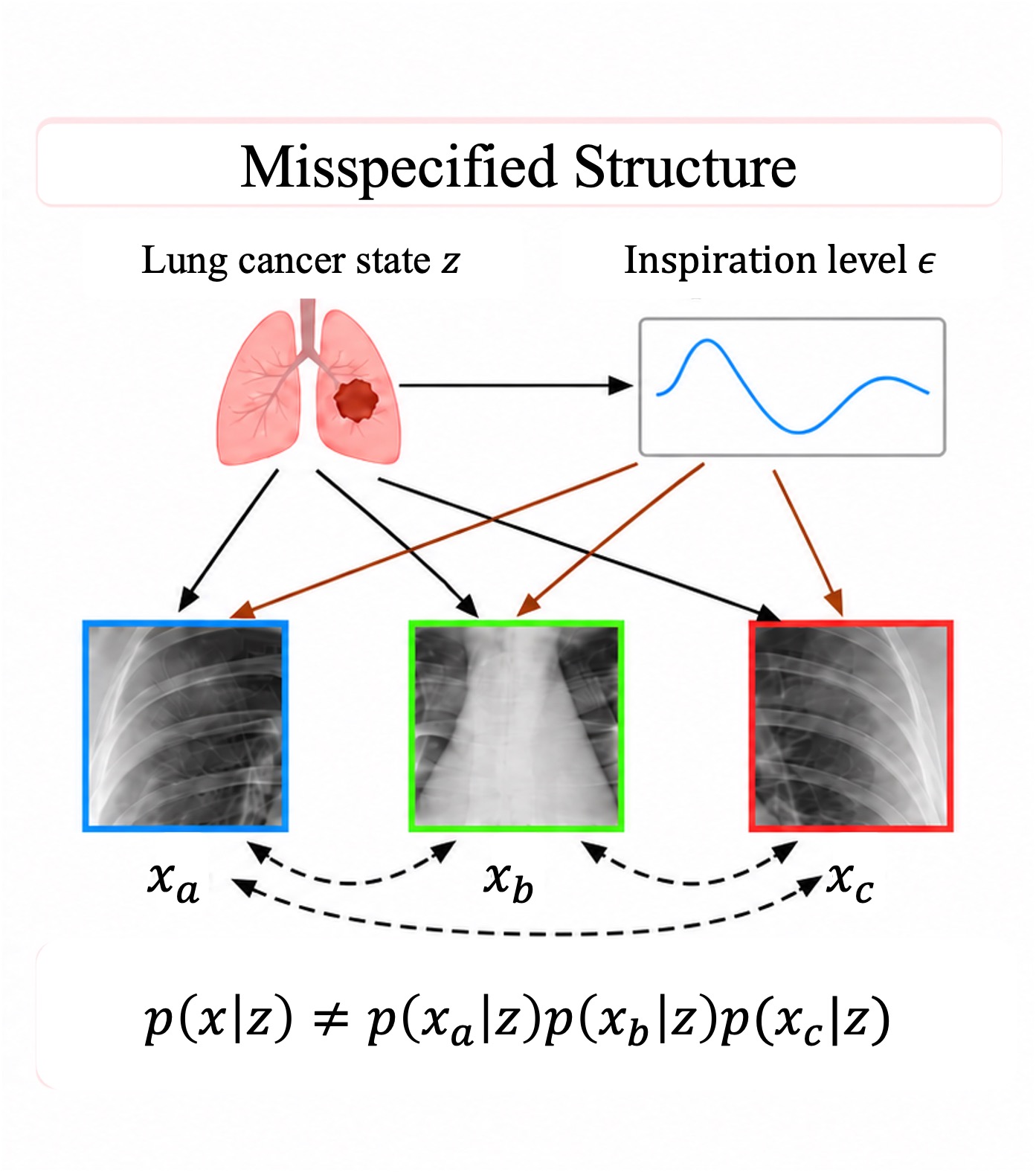}
\vspace{-25px}
\caption{
Visualization of the data generations of Eq.~\ref{eq:dgp} under misspecified structure.
}
\label{fig:mis}
\end{wrapfigure}

%% file: 3_disident_neurips25.tex
\section{Identifiability Results}\label{sec:disiden}

We first analyze the misspecified structure through factor analysis. To this end, we 
introduce two types of operators to characterize the structural
misspecification: 
\begin{definition}\bfsection{(Integral Linear Operator)}\label{def:integral_operator}
Consider random variables $\rvx_*$ and $\rvz$ with supports $\mathcal{X}_*$ and $\mathcal{Z}$ respectively. The integral linear operator $L_{\rvx_*|\rvz}$ maps a function $f \in F(\mathcal{Z})$ to another function $L_{\rvx_*|\rvz}f \in F(\mathcal{X}_*)$, defined by:
\begin{equation}\label{eq:operator}
    (L_{\rvx_*|\rvz}f)(\rvx_*) = \int_{\mathcal{Z}} p(\rvx_*|\rvz)f(\rvz)\,d\mathcal{Z}, \quad \forall \rvx_* \in \mathcal{X}_*
\end{equation}
where $p(\rvx_*|\rvz)$ denotes the conditional density of $\rvx_*$ given $\rvz$.
\end{definition}

Also, we present the definition of multiplication operator $\Lambda$:
\begin{definition}\bfsection{(Multiplication Operator)}\label{def:multi_operator}
\it
Let $r$ and $v$ be random variables with supports $\mathcal R$ and $\mathcal V$, respectively. For each fixed $r\in\mathcal R$, define the multiplication operator
$\Lambda_{r\mid v}: (\Lambda_{r\mid v}f)(v) := p(r\mid v)f(v)$, where $f\in\mathcal L^2(\mathcal V)$.
\end{definition}

With Definition~\ref{def:integral_operator} and ~\ref{def:multi_operator} established, we impose the following Assumption~\ref{assum:error} on $p(\rvx|\rvz)$.

\begin{assum}\label{assum:error}
The joint distribution of $(\rvx, \rvz)$ admits a bounded density with respect to a suitable product measure defined on their supports. Furthermore, all marginal and conditional densities derived from this joint distribution are also bounded.
\end{assum}

We begin by decomposing the bounded operator $L_{\rvx_\rva, \rvx_\rvb|\rvx_\rvc}$. The misspecified structure in Definition~\ref{def:misspeci} renders the previous decomposition and identifiability results from \citet{hu2008instrumental,zheng2025nonparametric}
$L_{\rvx_\rva,\rvx_\rvb|\rvx_\rvc} = L_{\rvx_\rva|\rvz}\Lambda_{\rvx_\rvb|\rvz}L_{\rvz|\rvx_\rvc}$ ( Section~\ref{sec:preliminaries})
invalid. 
To account for this violation, we explicitly define the difference operator $\rvd(\rvx_\rvb)$ as follows:
\begin{align}\label{eq:error_oper_1}
    & \rvd(\rvx_\rvb) = L_{\rvx_\rva,\rvx_\rvb|\rvx_\rvc} - L_{\rvx_\rva|\rvz}\Lambda_{\rvx_\rvb|\rvz}L_{\rvz|\rvx_\rvc} \nonumber \\
    & \rvd = \int \rvd(\rvx_\rvb) \, d\rvx_\rvb = L_{\rvx_\rva|\rvx_\rvc} - L_{\rvx_\rva|\rvz}L_{\rvz|\rvx_\rvc}
\end{align}
where $\Lambda_{\rvx_\rvb|\rvz}$ satisfies $\int_{\rvx_\rvb}\Lambda_{\rvx_\rvb|\rvz}\,d\rvx_\rvb = I$, with $I$ representing the identity operator.
Therefore,
$L_{\rvx_\rva|\rvz}L_{\rvz|\rvx_\rvc} = L_{\rvx_\rva|\rvx_\rvc} - \rvd$.

To the end of further decomposition, we also assume:
\begin{assum}\label{assum:injective}
The operators $L_{\rvx_\rva|\rvz}$
and $L_{\rvx_\rva|\rvx_\rvc}$ are injective and bounded-below.
\end{assum}

Assumption~\ref{assum:injective} implies that $L^{-1}_{\rvx_\rva|\rvz}$
and $L^{-1}_{\rvx_\rva|\rvx_\rvc}$ are well-defined and bounded.
Leveraging all previous assumptions, we are able to derive the following:
\begin{align}\label{eq:error_derivation}
    L_{\rvx_\rva|\rvz}\Lambda_{\rvx_\rvb|\rvz}L^{-1}_{\rvx_\rva|\rvz}
    = (L_{\rvx_\rva,\rvx_\rvb|\rvx_\rvc} - \rvd(\rvx_\rvb))(L_{\rvx_\rva|\rvx_\rvc} - \rvd)^{-1} = L_{\rvx_\rva,\rvx_\rvb|\rvx_\rvc}L^{-1}_{\rvx_\rva|\rvx_\rvc} + \it{Per}
\end{align}
where $\it{Per}$ denotes the perturbation operator that represents the structural misspecification.
Rather than requiring $\mathit{Per}$ itself to be known, we characterize its upper bound through $||Per||_{op}\leq \overline{Per}$, which is derived in Lemma~\ref{lemma:Per} in Section~\ref{sec:bound_per}. 
Consequently, our aim becomes to identify $\rvz$ from $L_{\rvx_\rva|\rvz}\Lambda_{\rvx_\rvb|\rvz}L^{-1}_{\rvx_\rva|\rvz}$ of Eq.~\ref{eq:error_derivation}.

It is worth noting that \citet{zheng2025nonparametric} particularly proves the subspace identifiability for $\it{Per}=0$ when the conditional independence holds. However, Definition~\ref{def:misspeci} indicates that $\it{Per}\neq 0$, and \citep{zheng2025nonparametric} can be seen as a special case of this work. 

Before going further, we must address two critical challenges of $L_{\rvx_\rva|\rvz}\Lambda_{\rvx_\rvb|\rvz}L^{-1}_{\rvx_\rva|\rvz}$: scaling ambiguity and spectral value degeneracy.
The scaling ambiguity arises when there exists a non-zero constant $c$ such that $\int c p(\rvx_\rva|\rvz)\,d\rvx_\rva > 1$. However, this ambiguity are solved since the probability density constraint $\int p(\rvx_\rva|\rvz)\, d\rvx_\rva = 1$ must always hold.
The degeneracy occurs if spectral value are repeated, thereby associating multiple eigenfunctions to a single spectral value. To eliminate this degeneracy, we introduce the following additional assumption:
\begin{assum}\label{assum:distinct_eigenvalue}
The operator
$L_{\rvx_\rva,\rvx_\rvb|\rvx_\rvc}L^{-1}_{\rvx_\rva|\rvx_\rvc}$ has distinct spectral values with cardinality equal to that of $\Lambda_{\rvx_\rvb|\rvz}$. Also, 
The map
$
\Theta:\mathcal Z\rightarrow\mathcal D, s.t.
\Theta(\rvz)=p(\rvx_\rvb\mid\rvz)
$
is injective.
\end{assum}

Under misspecification, Eq.~\ref{eq:error_derivation} shows that the $L_{\rvx_\rva|\rvz}\Lambda_{\rvx_\rvb|\rvz}L^{-1}_{\rvx_\rva|\rvz}$and $L_{\rvx_\rva,\rvx_\rvb|\rvx_\rvc}L^{-1}_{\rvx_\rva|\rvx_\rvc}$ differ by $\mathit{Per}$. We therefore quantify how far the spectral values of $L_{\rvx_\rva,\rvx_\rvb|\rvx_\rvc}L^{-1}_{\rvx_\rva|\rvx_\rvc}$ may move under this perturbation. 
To this end, we need to first justify the spectral analysis of Eq.~\ref{eq:error_derivation} by establishing that $L_{\rvx_\rva|\rvz}\Lambda_{\rvx_\rvb|\rvz}L^{-1}_{\rvx_\rva|\rvz}$ admits a unique spectral resolution. 
Since $p(\rvx_\rvb|\rvz)$ is real-valued, $\Lambda_{\rvx_\rvb\mid\rvz}$ is self-adjoint, which already admits a unique projection-valued measure. 
Its similarity transformation through $L_{\rvx_\rva|\rvz}$ then induces the corresponding unique spectral resolution of
$L_{\rvx_\rva|\rvz}\Lambda_{\rvx_\rvb|\rvz}
L^{-1}_{\rvx_\rva|\rvz}$ \citep{dunford1971linear}.
Thus, $L_{\rvx_\rva|\rvz}\Lambda_{\rvx_\rvb|\rvz}L^{-1}_{\rvx_\rva|\rvz}$ is equivalently the spectral resolution of $L_{\rvx_\rva,\rvx_\rvb|\rvx_\rvc}L^{-1}_{\rvx_\rva|\rvx_\rvc}+\mathit{Per}$.
The detailed proof is provided in Appendix~\ref{proof:prop1}.

Let $\rho\in \sigma(L_{\rvx_\rva,\rvx_\rvb|\rvx_\rvc}L^{-1}_{\rvx_\rva|\rvx_\rvc})$ and $\rho_\Lambda\in\sigma(L_{\rvx_\rva|\rvz}\Lambda_{\rvx_\rvb|\rvz}L^{-1}_{\rvx_\rva|\rvz})$, where $\sigma(*)$ denotes the spectral value.
The resolvent perturbation bound for bounded operator from \citep{trefethen2005spectra,amor2003extension} yields:
\begin{align}\label{eq:bound}
    dist(\rho,\rho_\Lambda)\leq 
    \kappa\overline{Per}
\end{align}
where $dist(\rho,\rho_\Lambda)=\inf_{\rho_\Lambda}|\rho-\rho_\Lambda|$, $\kappa=||L_{\rvx_\rva|\rvz}||_{op}||L^{-1}_{\rvx_\rva|\rvz}||_{op}$, and $||*||_{op}$ denotes the operator norm. 
Notably, $\kappa\geq 1$, since$||L_{\rvx_\rva|\rvz}L^{-1}_{\rvx_\rva|\rvz}||_{op}=||I||_{op}\leq ||L_{\rvx_\rva|\rvz}||_{op}||L^{-1}_{\rvx_\rva|\rvz}||_{op}$, where $I$ denotes the identity operator.
Let us define $\eta=\inf_{i\neq j}\frac{|\varrho^i-\varrho^j|}{2\kappa}-\alpha\geq 0$ for some constant $\alpha>0$, where $\varrho$ denotes the non-empty spectrum, $i,j$ denote the indices of the spectrum. 
Similarly, let $\varrho_\Lambda$ denotes the spectrum of $\rho_\Lambda$. We opt to measure $dist(\rho,\rho_\Lambda)=|\rho^i-\rho^i_\Lambda|$.
Equation~\ref{eq:bound} forms the basis of the precise and approximate identifiability results developed next.

\begin{examplebox}\bfsection{Assumption Intuitions}

$\bullet$ Assumption~\ref{assum:error} is a regularity condition to ensure that the operator expressions below are well defined.

$\bullet$ Assumption~\ref{assum:injective} is mild and prevents these operators from collapsing distinct functions to the same output.

$\bullet$ Assumption~\ref{assum:distinct_eigenvalue} prevents different latent variables from being indistinguishable through the same spectral value. Injectivity of $\Theta$ further ensures that distinct latent states induce distinct conditional distributions of $\rvx_\rvb$.
\end{examplebox}

\subsection{Precise subspace Identifiability}\label{sec:preciseiden}

After facilitating the structural misspecification in Eq.~\ref{eq:bound}, we now establish the precise identifiability result that meets Definition~\ref{def:subspace_iden}.
The key requirement is that $\overline{Per}$ needs to be sufficiently small so that distinct spectral values remain distinguishable.

Consider $||Per||_{op}\leq\overline{\it{Per}}<\eta$ 
\footnote{We provide the sufficient condition of $\overline{Per}<\eta$ in Lemma~\ref{lemma:sufficient_eta} in Section~\ref{sec:bound_per}.}
for some non-zero constant $\eta$, and substitute it into Eq.~\ref{eq:error_derivation} resulting in:
\begin{align}\label{eq:inequ}
    L_{\rvx_\rva|\rvz}\Lambda_{\rvx_\rvb|\rvz}L^{-1}_{\rvx_\rva|\rvz} < L_{\rvx_\rva,\rvx_\rvb|\rvx_\rvc}L^{-1}_{\rvx_\rva|\rvx_\rvc}+\eta
\end{align}

Given Eq.~\ref{eq:inequ},
we propose the following for the distinctness of $\rho^i_\Lambda$: 

\begin{proposition}\label{prop:eigenvalue}
Suppose Assumptions~\ref{assum:error} $\sim$ \ref{assum:distinct_eigenvalue} hold, and if both $\varrho^i$ and $\varrho^i_\Lambda$ include only $\rho^i$ and $\rho_\Lambda^i$, respectively. Then, for all $i>1$ and for all $\rvz$,
$$
\rho^i_{\Lambda}\in \left(\max\{\rho^i-\kappa\overline{\it{Per}}, \rho^{i-1}+\kappa\overline{\it{Per}}\}, \rho^i+\kappa\overline{\it{Per}}\right)
$$
and if $i=1$, for all $\rvz$,
$$
\rho^i_{\Lambda}\in (\rho^i-\kappa\overline{\it{Per}}, \rho^i+\kappa\overline{\it{Per}})
$$
guarantee the distinctness of $\rho_\Lambda$.
\end{proposition}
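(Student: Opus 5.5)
The plan is to read the statement as a matching-plus-separation claim. Equation~\ref{eq:bound} places each perturbed spectral value $\rho^i_\Lambda$ in a neighbourhood of its unperturbed partner $\rho^i$. The gap quantity $\eta$ then forces these neighbourhoods to be pairwise disjoint, so the $\rho^i_\Lambda$ inherit the distinctness granted to the $\rho^i$ by Assumption~\ref{assum:distinct_eigenvalue}.

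\textbf{Step 1: spectral correspondence.} Begin from Eq.~\ref{eq:error_derivation}, $L_{\rvx_\rva|\rvz}\Lambda_{\rvx_\rvb|\rvz}L^{-1}_{\rvx_\rva|\rvz}=L_{\rvx_\rva,\rvx_\rvb|\rvx_\rvc}L^{-1}_{\rvx_\rva|\rvx_\rvc}+\mathit{Per}$. Use the fact, noted before the proposition, that the left-hand side is similar via $L_{\rvx_\rva|\rvz}$ to the self-adjoint operator $\Lambda_{\rvx_\rvb|\rvz}$. It therefore has a unique spectral resolution, with eigenvalues $p(\rvx_\rvb|\rvz)$ indexed by $\rvz$ and eigenfunctions $L_{\rvx_\rva|\rvz}\delta_{\rvz}$. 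Apply the Bauer--Fike type resolvent bound (Eq.~\ref{eq:bound}) with condition number $\kappa=\|L_{\rvx_\rva|\rvz}\|_{op}\|L^{-1}_{\rvx_\rva|\rvz}\|_{op}$ and $\|\mathit{Per}\|_{op}\le\overline{\mathit{Per}}$. This gives, for each $i$, a partner with $|\rho^i-\rho^i_\Lambda|\le\kappa\overline{\mathit{Per}}$. The hypothesis that $\varrho^i$ and $\varrho^i_\Lambda$ are the singletons $\{\rho^i\}$ and $\{\rho^i_\Lambda\}$ lets me fix this matching index by index. Under strict inequality this yields $\rho^i_\Lambda\in(\rho^i-\kappa\overline{\mathit{Per}},\rho^i+\kappa\overline{\mathit{Per}})$, which is already the case $i=1$.

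\textbf{Step 2: disjointness and ordering.} Order the distinct values so that $\rho^{1}<\rho^{2}<\cdots$; this is possible because Assumption~\ref{assum:distinct_eigenvalue} makes them distinct and the cardinalities match. Under the regime $\overline{\mathit{Per}}<\eta=\inf_{i\ne j}|\varrho^i-\varrho^j|/(2\kappa)-\alpha$, we get $2\kappa\overline{\mathit{Per}}<\rho^i-\rho^{i-1}$. Hence $\rho^{i-1}+\kappa\overline{\mathit{Per}}<\rho^i-\kappa\overline{\mathit{Per}}$, so the intervals around consecutive $\rho$'s are disjoint. In particular $\rho^i_\Lambda>\rho^{i-1}+\kappa\overline{\mathit{Per}}$, which gives the stated lower bound $\max\{\rho^i-\kappa\overline{\mathit{Per}},\rho^{i-1}+\kappa\overline{\mathit{Per}}\}$. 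Disjointness of the intervals, which contain $\rho^i_\Lambda$ and $\rho^j_\Lambda$ respectively, then shows $\rho^i_\Lambda\neq\rho^j_\Lambda$ for $i\neq j$. This holds for every $\rvz$, because the bound is uniform in the operator norm.

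\textbf{Main obstacle.} The hard part is Step 1's claim that the matching is one-to-one. Bauer--Fike only says each unperturbed value has some perturbed value nearby, not that distinct $\rho$'s pick distinct $\rho_\Lambda$'s. I would handle this in two ways. First, the singleton hypothesis removes any ambiguity in which value $\rho^i_\Lambda$ refers to. Second, a continuity argument along $L_{\rvx_\rva,\rvx_\rvb|\rvx_\rvc}L^{-1}_{\rvx_\rva|\rvx_\rvc}+t\,\mathit{Per}$ for $t\in[0,1]$ closes the gap. The Riesz projection onto each disjoint disc of radius $\kappa\overline{\mathit{Per}}$ is continuous in $t$, so its rank is constant. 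Each disc therefore keeps exactly one spectral value, which makes the correspondence bijective. If the spectrum is continuous rather than discrete, I would instead apply the same argument to spectral projections of the resolution induced by $\Lambda_{\rvx_\rvb|\rvz}$, using the injectivity of $\Theta$.
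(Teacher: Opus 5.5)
Your Step~2 is the paper's argument. The proof sketch of Proposition~\ref{prop:eigenvalue} notes that a single $\rho_\Lambda$ within $\kappa\overline{\mathit{Per}}$ of two distinct $\rho$'s would force $|\rho^1-\rho^2|\le 2\kappa\overline{\mathit{Per}}$, hence $\overline{\mathit{Per}}\ge\eta+\alpha$, contradicting $\overline{\mathit{Per}}<\eta$. Your disjoint consecutive intervals are the same triangle-inequality separation. Like you, the paper gets the index-wise pairing $\rho^i\leftrightarrow\rho^i_\Lambda$ only from the singleton hypothesis and the convention $\mathrm{dist}(\rho,\rho_\Lambda)=|\rho^i-\rho^i_\Lambda|$. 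At the paper's level of rigor, your Steps~1--2 already suffice.

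The ``main obstacle'' paragraph misplaces the difficulty, and its remedy does not work as written.

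\textbf{One-to-one is already proved.} Take $\rho^i_\Lambda$ to be any point of $\sigma(L_{\rvx_\rva|\rvz}\Lambda_{\rvx_\rvb|\rvz}L^{-1}_{\rvx_\rva|\rvz})$ within $\kappa\overline{\mathit{Per}}$ of $\rho^i$; Eq.~\ref{eq:bound} guarantees one exists. Two indices cannot share such a point, by exactly your Step~2. So no homotopy is needed for the proposition.

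\textbf{The homotopy fails as set up.} Eq.~\ref{eq:bound} is Bauer--Fike with $\kappa$ the condition number of $L_{\rvx_\rva|\rvz}$. That operator diagonalizes $L_{\rvx_\rva|\rvz}\Lambda_{\rvx_\rvb|\rvz}L^{-1}_{\rvx_\rva|\rvz}$, not $L_{\rvx_\rva,\rvx_\rvb|\rvx_\rvc}L^{-1}_{\rvx_\rva|\rvx_\rvc}$. Write the path as $L_{\rvx_\rva,\rvx_\rvb|\rvx_\rvc}L^{-1}_{\rvx_\rva|\rvx_\rvc}+t\,\mathit{Per}=L_{\rvx_\rva|\rvz}\Lambda_{\rvx_\rvb|\rvz}L^{-1}_{\rvx_\rva|\rvz}-(1-t)\mathit{Per}$. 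Bauer--Fike then confines its spectrum to $(1-t)\kappa\overline{\mathit{Per}}$-neighbourhoods of the $\rho_\Lambda$'s, not of the $\rho^i$. Nothing stops that spectrum from crossing a circle $|\lambda-\rho^i|=\kappa\overline{\mathit{Per}}$. At such a crossing your Riesz projection is undefined and its rank can jump.

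\textbf{What is actually left open is surjectivity.} Eq.~\ref{eq:bound} does not say whether $\sigma(L_{\rvx_\rva|\rvz}\Lambda_{\rvx_\rvb|\rvz}L^{-1}_{\rvx_\rva|\rvz})$ contains values, or multiplicities, not accounted for by the pairing. This is what Theorem~\ref{thm:submanifold_iden} eventually needs. Two routes close it:
\begin{itemize}
\item For extra values, use pigeonhole with the cardinality clause of Assumption~\ref{assum:distinct_eigenvalue}.
\item Alternatively, run your Riesz-projection count with contours around the connected components of $\bigcup_j\{\lambda:|\lambda-\rho^j_\Lambda|\le\kappa\overline{\mathit{Per}}\}$, a union Bauer--Fike does control for every $t$. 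In the finite-rank case each component then holds equally many $\rho$'s and $\rho_\Lambda$'s counted with multiplicity. The injectivity from Step~2 then forces the $\rho_\Lambda$'s there to be simple and matched, provided the $\rho^i$ are simple.
\end{itemize}
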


\begin{examplebox}
\bfsection{Insight:} 
For fixed $\rvx_\rvb$, $\sigma(\Lambda_{\rvx_\rvb|\rvz})$ is determined by the essential range of $p(\rvx_\rvb|\rvz)$.
Proposition~\ref{prop:eigenvalue} ensures that $\overline{Per}$ does not collapse two distinct $\rho_\Lambda$ into the same regime. Together with the injectivity of $\Theta$ in Assumption~\ref{assum:distinct_eigenvalue}, $\rho_\Lambda$ remains invariant by re-labeling through permuting $\rvz$. This can be formally described by $\rvz'=h'(\rvz)$, where $h'$ denotes a bijection $h':\mathcal{Z}\rightarrow\mathcal{Z}$, and $\rvz'$ is a permuted version of $\rvz$.
\end{examplebox}

\textbf{Proof Sketch:} We proceed by contradiction. Assuming the existence of distinct spectral values $\rho^1,\rho^2$ satisfy $|\rho^1-\rho^i_\Lambda|\leq \kappa\overline{\it{Per}}$ and $|\rho^2-\rho^i_\Lambda|\leq \kappa\overline{\it{Per}}$. This leads directly to $\overline{\it{Per}}\geq\eta$, contradicting the condition $\overline{\it{Per}}<\eta$. 

Notably, the bijection $h'$ introduced above might not coincide with $h$ defined in Definition~\ref{def:subspace_iden}. 
To meet the requirement of $h$ being differentiable, we introduce the following assumption:

\begin{assum}\label{assum:M}
There exists a map $M: \mathcal D\rightarrow\mathcal Z$  such that $M(p(\rvx_\rvb|\rvz))=M(p(\rvx_\rvb|h'(\rvz)))=h(\rvz)$, where $\mathcal D$ is the support of $p(\rvx_\rvb|\rvz)$, $h$ is a differentiable transformation.
\end{assum}

We are now ready to state our main theoretical result on subspace identifiability:

\begin{thm}\label{thm:submanifold_iden}
Consider observed variables $\rvx \in \mathbb{R}^K$ and the estimated latent variables $\hat{\rvz}\in\mathbb{R}^N$, suppose that Proposition~\ref{prop:eigenvalue} and Assumption~\ref{assum:M} hold, $\exists h, s.t., \hat{\rvz}=h(\rvz)=h'(\rvz)$, 
In other words, $\rvz$ must be subspace identified.
\end{thm}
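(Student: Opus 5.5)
The plan follows the eigen-decomposition argument of \citet{hu2008instrumental,zheng2025nonparametric}, with Proposition~\ref{prop:eigenvalue} standing in for the exact equality $\mathit{Per}=0$. The argument has four steps: show the observed operator pins down an eigen-decomposition, pass from that decomposition to the conditional densities, apply Assumption~\ref{assum:M} to get a differentiable $h$, and finally match the estimate to $\rvz$.

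\textbf{Step 1: observational equivalence.} Suppose the true model $(\rvz,p)$ and an estimated model $(\hat{\rvz},\hat p)$ produce the same distribution of $\rvx$. Then both models share the observable operator $L_{\rvx_\rva,\rvx_\rvb|\rvx_\rvc}L^{-1}_{\rvx_\rva|\rvx_\rvc}$. By Eq.~\ref{eq:error_derivation}, each model's similarity-transformed operator $L_{\rvx_\rva|\rvz}\Lambda_{\rvx_\rvb|\rvz}L^{-1}_{\rvx_\rva|\rvz}$ equals this common operator plus a perturbation of operator norm at most $\overline{Per}<\eta$.

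\textbf{Step 2: matching spectral values and eigenfunctions.} Proposition~\ref{prop:eigenvalue} places each $\rho^i_\Lambda$ in an interval around $\rho^i$, and these intervals are pairwise disjoint. So, for both models, the spectral values are distinct and are indexed by the same labels $i$ as the observable spectrum. The spectral resolution of $L_{\rvx_\rva|\rvz}\Lambda_{\rvx_\rvb|\rvz}L^{-1}_{\rvx_\rva|\rvz}$ is unique, being a similarity transform of the self-adjoint $\Lambda_{\rvx_\rvb|\rvz}$. Hence the eigenfunctions $p(\rvx_\rva|\rvz)$ attached to the $i$-th spectral value are determined up to scale, and the scale is fixed by $\int p(\rvx_\rva|\rvz)\,d\rvx_\rva=1$. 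The only remaining freedom is the order in which the $\rvz$-indexed eigenfunctions are labelled. Consequently, for every fixed $\rvx_\rvb$, the spectral values $p(\rvx_\rvb|\rvz)$ of the two models agree up to a relabelling of the index set.

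\textbf{Step 3: the relabelling is a single bijection.} The relabelling must not depend on $\rvx_\rvb$. The eigenfunctions $p(\rvx_\rva|\cdot)$ do not depend on $\rvx_\rvb$, so the labelling coming from eigenfunction matching is shared across all $\rvx_\rvb$. By the injectivity of $\Theta$ (Assumption~\ref{assum:distinct_eigenvalue}), this shared labelling is a bijection $h':\mathcal Z\to\mathcal Z$ with
\begin{align*}
p(\rvx_\rvb|\hat{\rvz}) = p(\rvx_\rvb|h'(\rvz)) \quad \text{for all } \rvx_\rvb .
\end{align*}

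\textbf{Step 4: upgrading $h'$ to a differentiable $h$.} Apply the map $M$ of Assumption~\ref{assum:M} to both sides. This gives $\hat{\rvz}=M(p(\rvx_\rvb|\hat{\rvz}))=M(p(\rvx_\rvb|h'(\rvz)))=h(\rvz)$, with $h$ differentiable. The estimate is identified with a value that both $h'$ and $h$ assign, so $h$ coincides with $h'$ on the support. Invertibility of $h$ is inherited from the bijectivity of $h'$. This is exactly Definition~\ref{def:subspace_iden}.

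\textbf{Main obstacle.} I expect Step 2 to be the hardest, specifically passing from isolated spectral values to identified eigenfunctions when the spectrum may be continuous (indexed by $\rvz$). Proposition~\ref{prop:eigenvalue} gives only pointwise separation intervals. To get uniqueness of the projection-valued measure for the perturbed operator, I would first try Riesz projections: integrate the resolvent over small contours around each interval $(\rho^i-\kappa\overline{Per},\rho^i+\kappa\overline{Per})$. Combined with Eq.~\ref{eq:bound}, this should show that each spectral projection of the true operator is carried to a unique projection of the estimated one. Beyond that, I would lean on the uniqueness statement already invoked from \citet{dunford1971linear}.
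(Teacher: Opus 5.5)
Your outline follows the paper's own proof: the perturbed decomposition of Eq.~\ref{eq:error_derivation}, uniqueness of the spectral resolution inherited from the self-adjoint $\Lambda_{\rvx_\rvb|\rvz}$, separation from Proposition~\ref{prop:eigenvalue}, injectivity of $\Theta$ for the relabelling $h'$, and Assumption~\ref{assum:M} for differentiability. The gap is at the end of your Step~2, where you conclude that the two models' spectral values and eigenfunctions ``agree up to a relabelling.''

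In the $\mathit{Per}=0$ argument of \citet{hu2008instrumental} this step works for a specific reason. Both models diagonalize one and the same \emph{observable} operator $O:=L_{\rvx_\rva,\rvx_\rvb|\rvx_\rvc}L^{-1}_{\rvx_\rva|\rvx_\rvc}$, so uniqueness of its spectral resolution forces the two decompositions to coincide. Here the operator you diagonalize depends on the model. The true model gives $L_{\rvx_\rva|\rvz}\Lambda_{\rvx_\rvb|\rvz}L^{-1}_{\rvx_\rva|\rvz}=O+\mathit{Per}$, the estimated one gives $O+\widehat{\mathit{Per}}$, and nothing in the hypotheses makes $\mathit{Per}-\widehat{\mathit{Per}}$ vanish.

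\begin{itemize}
\item Uniqueness of each operator's spectral resolution does not transfer from one operator to the other.
\item Proposition~\ref{prop:eigenvalue} only places $\rho^i_\Lambda$ and $\hat{\rho}^i_\Lambda$ in the same interval around $\rho^i$. This holds even if you grant your Step~1 assumption that the estimated model's perturbation is also below $\eta$, which is not among the hypotheses. The most you get is $|\rho^i_\Lambda-\hat{\rho}^i_\Lambda|\le\kappa\overline{Per}+\hat{\kappa}\widehat{\overline{Per}}$, which is exactly the slack the paper keeps in Theorem~\ref{thm:approx_vareps}.
\item Your Riesz-projection plan does not remove this slack. Contours around the separated intervals do give a one-to-one correspondence between spectral clusters of the two operators. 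But the resolvent identity yields only $\|P_i-\hat{P}_i\|\lesssim\|\mathit{Per}-\widehat{\mathit{Per}}\|$, so $p(\rvx_\rva|\rvz)$ and its estimated counterpart are matched, not equal. Such contours also exist only if the spectrum is genuinely discrete, which a continuum of $\rvz$ with continuous $\Theta$ rules out.
\end{itemize}

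Consequently the equality $p(\rvx_\rvb|\hat{\rvz})=p(\rvx_\rvb|h'(\rvz))$ in your Step~3 is unsupported, and Step~4 inherits the gap. Step~4 also uses $\hat{\rvz}=M(p(\rvx_\rvb|\hat{\rvz}))$, which Assumption~\ref{assum:M} does not provide.

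The paper's proof does not supply the missing ingredient either. It shows that $P_{\rvx_\rvb}(\Delta)=L_{\rvx_\rva|\rvz}E_{\rvx_\rvb}(\Delta)L^{-1}_{\rvx_\rva|\rvz}$ is the unique spectral projection of the \emph{true} operator, via its range and null space. It never compares two observationally equivalent models; it simply asserts the relabelling $h'$ and invokes Assumption~\ref{assum:M}. So your proposal is faithful to the paper, and more explicit about where the difficulty lies.

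To turn it into a proof of exact identifiability, you need an additional hypothesis that fixes the perturbation across observationally equivalent models. One example is $\widehat{\mathit{Per}}=\mathit{Per}$ for every $\rvx_\rvb$, which reduces the problem to the eigen-decomposition argument of \citet{hu2008instrumental} applied to the common operator $O+\mathit{Per}$. Without such a hypothesis, the separation in Proposition~\ref{prop:eigenvalue} only gives you the approximate conclusion of Theorem~\ref{thm:approx_vareps}.
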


\begin{examplebox}\bfsection{Insight:}
    Theorem~\ref{thm:submanifold_iden} characterizes a perturbation regime in which the subspace identifiability in Definition~\ref{def:subspace_iden} survives from structural misspecification. 
\end{examplebox}

The detailed proofs of both Proposition~\ref{prop:eigenvalue} and Theorem~\ref{thm:submanifold_iden} are in Sec.\ref{proof:prop1}.

\begin{examplebox}
\bfsection{Assumption intuition:}

    Assumption~\ref{assum:M} ensures that the transformation $h$ established through the Proposition~\ref{prop:eigenvalue} belongs to the transformation class required by Definition 1.
\end{examplebox}

\subsection{Approximate Subspace Identifiability}\label{sec:error_bound}

Theorem~\ref{thm:submanifold_iden} establishes precise identifiability under the perturbation condition $\overline{Per}<\eta$. However, this inequality is not guaranteed when the sufficient condition in Lemma~\ref{lemma:sufficient_eta} is not satisfied.
Consequently, Theorem~\ref{thm:submanifold_iden} may no longer hold even under Assumptions~\ref{assum:error}-\ref{assum:distinct_eigenvalue}.
Motivated by this limitation, we consider \emph{approximate identifiability}, which quantifies the deviation of the recovered representation from its admissible equivalence class:

\begin{definition}\label{def:approx_iden}
\bfsection{(Approximate identifiability)}
Let $\mathcal H$ denote the class of admissible invertible and smooth transformations in Definition~\ref{def:subspace_iden}. The latent variable $\rvz$ is $\varepsilon$-approximately identifiable with respect to $\mathcal H$ if
\begin{align}
d_{\mathcal H}(\hat{\rvz},\rvz):=\inf_{h\in\mathcal H}|\hat{\rvz}-h(\rvz)|= \varepsilon
\label{eq:approx_iden_def}
\end{align}
where the identifiability error $\varepsilon\geq 0$. In particular, $\varepsilon=0$ reduces to precisely subspace identifiability.
\end{definition}

We are now ready to connect $\varepsilon$ with $\overline{Per}$. To this end, we impose the following assumption and Theorem:

\begin{assum}\label{assum:smooth_ztopb}
The latent space $\mathcal Z$ is compact, and the conditional-distribution map $\Theta$ in Assumption~\ref{assum:subiden_car_supp} is continuously differentiable. Moreover, there exists a constant $c_{\Theta}>0$ such that, for an admissible invertible transformation $h\in\mathcal H$, $\Theta$ satisfies the lower bound of Bi-Lipschitz continuity by:
\begin{align}
c_{\Theta}|\hat{\mathbf{z}}-h(\mathbf{z})| \leq \|\Theta(\mathbf{z})-\widehat{\Theta}(\hat{\mathbf{z}})\|_{\mathcal F}.
\label{eq:inverse_lipschitz_thm2}
\end{align}
\end{assum}

\begin{thm}\label{thm:approx_vareps}
Suppose Assumptions~\ref{assum:error}--\ref{assum:distinct_eigenvalue} and Assumption~\ref{assum:smooth_ztopb} hold, and the spectral perturbation bounds in Eq.~\ref{eq:bound} apply to the corresponding spectral values of the true and estimated representations. Then the approximate identification error in Definition~\ref{def:approx_iden} satisfies
\begin{align}
\varepsilon\leq\frac{\kappa\overline{Per}+\hat{\kappa}\widehat{\overline{Per}}}{c_{\Theta}}
\label{eq:approx_vareps_bound}
\end{align}
\end{thm}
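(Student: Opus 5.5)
The plan is to pass through the spectral quantities, using $\Theta$ as the bridge. Both the true and the estimated models produce the same observable operator $L_{\rvx_\rva,\rvx_\rvb|\rvx_\rvc}L^{-1}_{\rvx_\rva|\rvx_\rvc}$, because the estimated model is fit to the same observational distribution $p(\rvx)$. Its spectral values $\rho$ therefore act as a common anchor. On the true side, the spectral values $\rho_\Lambda=p(\rvx_\rvb|\rvz)$ of $L_{\rvx_\rva|\rvz}\Lambda_{\rvx_\rvb|\rvz}L^{-1}_{\rvx_\rva|\rvz}$ encode $\Theta(\rvz)$. On the estimated side, the spectral values $\hat\rho_\Lambda=\hat p(\rvx_\rvb|\hat\rvz)$ of $L_{\rvx_\rva|\hat\rvz}\hat\Lambda_{\rvx_\rvb|\hat\rvz}L^{-1}_{\rvx_\rva|\hat\rvz}$ encode $\widehat\Theta(\hat\rvz)$.

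First, write the estimated decomposition exactly as in Eq.~\ref{eq:error_derivation}, with perturbation $\widehat{Per}$ bounded by $\widehat{\overline{Per}}$ and condition number $\hat\kappa=\|L_{\rvx_\rva|\hat\rvz}\|_{op}\|L^{-1}_{\rvx_\rva|\hat\rvz}\|_{op}$. Assumptions~\ref{assum:error}--\ref{assum:distinct_eigenvalue} make both spectral resolutions well defined, as in Appendix~\ref{proof:prop1}.

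Next, apply Eq.~\ref{eq:bound} to each model with the matched indexing $dist(\rho,\rho_\Lambda)=|\rho^i-\rho^i_\Lambda|$. This gives $|\rho^i-\rho^i_\Lambda|\le\kappa\overline{Per}$ and $|\rho^i-\hat\rho^i_\Lambda|\le\hat\kappa\widehat{\overline{Per}}$. The triangle inequality then yields, for each spectral index,
\begin{align*}
|\rho^i_\Lambda-\hat\rho^i_\Lambda|\le \kappa\overline{Per}+\hat\kappa\widehat{\overline{Per}}.
\end{align*}

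The third step, which I expect to be the main obstacle, is lifting this pointwise eigenvalue bound to a bound on $\|\Theta(\rvz)-\widehat\Theta(\hat\rvz)\|_{\mathcal F}$. For each fixed $\rvx_\rvb$, the spectrum of $\Lambda_{\rvx_\rvb|\rvz}$ is the essential range of $p(\rvx_\rvb|\cdot)$. Under the labelling where index $i$ corresponds to the latent value $\rvz$ (and to $\hat\rvz$ in the estimated model), the bound above therefore reads $|p(\rvx_\rvb|\rvz)-\hat p(\rvx_\rvb|\hat\rvz)|\le\kappa\overline{Per}+\hat\kappa\widehat{\overline{Per}}$ uniformly in $\rvx_\rvb$. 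I would take $\|\cdot\|_{\mathcal F}$ to be the sup norm over $\rvx_\rvb$, or else absorb the normalization into $c_\Theta$. The pairing between $\rvz$ and $\hat\rvz$ is where the infimum over $h\in\mathcal H$ enters. Compactness of $\mathcal Z$ and continuity of $\Theta$ should guarantee that such a matching exists and that the infimum is attained.

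Finally, invoke Assumption~\ref{assum:smooth_ztopb}:
\begin{align*}
c_\Theta|\hat\rvz-h(\rvz)|\le\|\Theta(\rvz)-\widehat\Theta(\hat\rvz)\|_{\mathcal F}\le\kappa\overline{Per}+\hat\kappa\widehat{\overline{Per}}.
\end{align*}
Taking the infimum over $h\in\mathcal H$ and using Definition~\ref{def:approx_iden} gives Eq.~\ref{eq:approx_vareps_bound}. As a consistency check, when $\overline{Per}=\widehat{\overline{Per}}=0$ the bound gives $\varepsilon=0$, recovering Theorem~\ref{thm:submanifold_iden}.
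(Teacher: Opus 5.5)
Your proposal is correct and follows the paper's proof essentially step for step. The paper also anchors both models at the common observable spectral values $\rho^i$ and uses the triangle inequality to get $|\rho^i_\Lambda-\hat\rho^i_\Lambda|\le\kappa\overline{Per}+\hat\kappa\widehat{\overline{Per}}$. It then reads this through the essential range as a bound on $|p(\rvx_\rvb|\rvz)-\hat p(\rvx_\rvb|\hat\rvz)|$, takes the essential supremum over $\rvx_\rvb$ to obtain the $\|\cdot\|_{\mathcal F}$ bound, and concludes via the inverse-Lipschitz inequality of Assumption~\ref{assum:smooth_ztopb}. Your compactness and attainment remark is not used by the paper and is unnecessary, since bounding the infimum $\varepsilon$ from above only requires evaluating at the admissible $h$ for which that inequality holds.
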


\begin{examplebox}
\bfsection{Insight:}
Theorems~\ref{thm:submanifold_iden} and \ref{thm:approx_vareps} characterize two identifiability regimes.
When $\overline{\mathit{Per}}<\eta$, the spectral value remain separated and precise subspace identifiability is guaranteed.
When this condition cannot be guaranteed, Theorem~\ref{thm:approx_vareps} provides a relaxation: 
there exists an error $\varepsilon$ bounded by Eq.~\ref{eq:approx_vareps_bound}.
\end{examplebox}

Under Assumptions~\ref{assum:error}--\ref{assum:distinct_eigenvalue}, the same spectral perturbation argument as in Eq.~\ref{eq:bound} yields
$|\rho^i-\rho^i_{\Lambda}|\leq\kappa\overline{Per}$ and $|\rho^i-\hat{\rho}^i_{\Lambda}|\leq\hat{\kappa}\widehat{\overline{Per}}$.
Therefore, the triangle inequality results in
$|\rho^i_{\Lambda}-\hat{\rho}^i_{\Lambda}|\leq\kappa\overline{Per}+\hat{\kappa}\widehat{\overline{Per}}$.
Combining this perturbation bound with the inverse-Lipschitz condition in Assumption~\ref{assum:smooth_ztopb} yields the upper bound in
Eq.~\ref{eq:approx_vareps_bound}. The detailed proof is provided in Sec.~\ref{sec:approxiden}.

\begin{examplebox}\bfsection{Assumption intuition:}

    Assumption~\ref{assum:smooth_ztopb} provides the quantitative measure to connect the discrepancy in $\mathcal D$ with discrepancy in $\mathcal Z$.
\end{examplebox}

%% file: 4_variden.tex
\subsection{Precise Component-wise Identifiability}\label{sec:variden}

In this section, we introduce our precise component-wise identifiability upon Theorem~\ref{thm:submanifold_iden}. {\it Extending the result to approximate component-wise identifiability requires additional component-level information and is left for future work.}

Theorem~\ref{thm:submanifold_iden} guarantees that $\hat{\rvz}=h(\rvz)$ is not a function of $\epsilon$, hence $\rvz$ and $\epsilon$ are disentangled. 
We now focus on identifying $\rvz$ in a component-wise manner.
Consider that each dimensions of $\rvz$ are independent of each other ($p(\rvz)=\prod_{n=1}^N p(\rvz^n)$), 
Taking inspirations from \citep{sparsecausal_icml23,sparsecausal_neurips23_1,sparse_icml24},
we extend the definition of structural sparsity to the function from $\rvz$ to $p(\rvx|\rvz)$ and establish the following proposition:

\begin{proposition}\label{thm:variabe_iden}
Suppose (Theorem~\ref{thm:submanifold_iden}) holds, and the following assumptions and regularization conditions are satisfied:
\begin{enumerate}
    \item\label{assum:independence_z} {\bf (Independence)}: For $n\in [N]$, $\rvz^n$ are independent of each other: $p(\rvz)=\prod_{n=1}^N p(\rvz^n)$. 
    \item\label{assum:ssparse} {\bf (Structural sparsity)}: The map $\Phi: \mathcal{Z}\rightarrow \mathcal P$ is smooth, where $\mathcal{Z}$ denotes the support of $\rvz$, and $\mathcal{P}$ denotes the support of $p(\rvx|\rvz)$ ( $\mathcal P \subset \mathcal{L}^2(\mathcal{X},\mu)$ ($\mathcal X$ denotes the space of $\rvx$, $\mu$ denotes the measure)). Let $G$ and $\hat{G}$ be the binary support matrix of Jacobian $J_{\Phi}(\rvz)$ and $J_{\hat{\Phi}}(\hat{\rvz})$, respectively, 
i.e., $G^{i,j}= 1$
 if $||\frac{\partial p(\rvx^i|\rvz)}{\partial{\rvz^j}}||_{\mathcal{L}^2}\neq 0$,
    and $0$ for otherwise.
    For each $n \in [1,N]$, there exists a subset of indices $\mathcal{C}_k$ satisfying $\bigcap_{m\in\mathcal{C}_k}G^{m,:}=\{n\}$;
    \item\label{assum:sparsity} $|\hat{G}|_0\leq |G|_0$, where $|*|_0$ is the $\ell_0$ norm.
\end{enumerate}
Then, $\hat{\rvz}$ must correspond component-wise to a permutation of the true latent variables $\rvz$.
\end{proposition}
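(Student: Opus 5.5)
We start from Theorem~\ref{thm:submanifold_iden}, which gives an invertible, differentiable $h$ with $\hat{\rvz}=h(\rvz)$. The estimated model then satisfies $\hat{\Phi}(\hat{\rvz})=\Phi(\rvz)$ as elements of $\mathcal P$, since both describe the same conditional law $p(\rvx|\cdot)$ at corresponding latent points. Equivalently, $\Phi=\hat{\Phi}\circ h$. Differentiating with the chain rule gives, for every $\rvz$,
\begin{align*}
J_{\Phi}(\rvz)=J_{\hat{\Phi}}(h(\rvz))\,J_h(\rvz),
\end{align*}
where row $i$ of $J_{\Phi}$ collects the $\mathcal L^2$-valued derivatives $\partial p(\rvx^i|\rvz)/\partial \rvz^j$. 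The argument reduces to a support-matching problem among $G$, $\hat G$ and the support $T$ of $J_h$. $J_h(\rvz)$ is invertible, because $h$ is a diffeomorphism onto its image.

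\textbf{Step 1: an assignment inside the support of $J_h$.} Invertibility of $J_h$ gives a nonzero term in its determinant expansion, so there is a permutation $\sigma$ with $T^{\sigma(j),j}\neq 0$ for all $j$. I will follow the standard structural-sparsity argument of \citep{sparsecausal_icml23,sparse_icml24}. Whenever $G^{i,j}=1$, the $(i,j)$ entry of $\hat J T$ is nonzero, so some $k$ has $\hat G^{i,k}=1$ and $T^{k,j}\neq 0$. Choosing $\sigma$ appropriately then yields $G^{i,j}=1\Rightarrow \hat G^{i,\sigma(j)}=1$. The $\mathcal L^2$-valued entries need care here. I will treat each entry as a vector in $\mathcal L^2$, so a nonzero linear combination requires at least one nonzero summand. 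I will also use genericity: the support pattern is taken over a set of $\rvz$ whose Jacobian evaluations span the relevant subspace, so that "nonzero for some $\rvz$" can be promoted to an inclusion of supports.

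\textbf{Step 2: equality via the $\ell_0$ condition.} The inclusion gives $|G|_0\le|\hat G|_0$. Combined with condition~\ref{assum:sparsity}, this forces $|\hat G|_0=|G|_0$, so $\hat G$ is exactly the column permutation $G P_\sigma$.

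\textbf{Step 3: $T$ is a permutation.} Suppose $T^{k,j}\neq0$ for some $k\neq\sigma(j)$. Then for each row $i$ with $G^{i,j}=1$, the product $\hat J T$ and the exact support match force $\hat G^{i,k}=1$, i.e.\ $G^{i,\sigma^{-1}(k)}=1$. Let $n=j$ and take the set $\mathcal C_k$ from condition~\ref{assum:ssparse}, so that $\bigcap_{m\in\mathcal C_k}G^{m,:}=\{j\}$. Every row $m\in\mathcal C_k$ then also contains $\sigma^{-1}(k)\neq j$, which contradicts the intersection being $\{j\}$. Hence each column of $J_h$ has a single nonzero entry everywhere. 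By continuity and connectedness of the support, $\hat{\rvz}^{\sigma(j)}$ depends only on $\rvz^{j}$, which is Definition~\ref{def:component_iden}. The independence condition~\ref{assum:independence_z} keeps the factorized prior consistent with this form. It is also what licenses the paper's product-structured variational model in the estimation step, though it is not essential to the support argument itself.

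\textbf{Main obstacle.} The delicate part is Step 1: converting pointwise chain-rule relations with $\mathcal L^2$-valued entries into a single global permutation $\sigma$. The difficulty is ruling out cancellations in $\sum_k \hat J^{i,k}T^{k,j}$. I would first try the spanning-set trick, which picks finitely many points $\rvz_\ell$ where the row-space of $J_\Phi$ is full on its support. Solving the resulting linear system then shows $T_{\cdot,j}$ lies in the span of the allowed coordinates. If that fails, I would add a mild genericity assumption that the supports of $J_\Phi$ and $J_h$ are constant on a dense open set.
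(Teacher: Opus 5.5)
\paragraph{Verdict.} Your route is the paper's: chain rule, then a permutation-consistent support inclusion, then a contradiction with structural sparsity. Your Step 2 is a genuine improvement. The paper's contradiction step uses the reverse inclusion $\hat{G}\subseteq\{(i,\sigma(j)):G^{i,j}=1\}$ without citing condition 3, and you supply exactly that. The real problem is that Step 1 does not go through, and Step 3 inherits the same problem.

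\paragraph{The gap in Step 1.} From $J_\Phi=J_{\hat{\Phi}}J_h$, the fact $G^{i,j}=1$ only gives \emph{some} witness $k=k(i,j)$ with $\hat{G}^{i,k}=1$ and $(J_h)^{k,j}\neq 0$. Different rows $i$ can use different witnesses, and the determinant permutation $\sigma$ need not be any of them. So ``choosing $\sigma$ appropriately'' has nothing behind it.

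What the argument needs is a \emph{universal} inclusion: for every $(i,j)$ with $G^{i,j}=1$ and \emph{every} $k$ with $(J_{h^{-1}})^{j,k}\neq 0$, one has $\hat{G}^{i,k}=1$. This comes from the other orientation,
$\partial\hat{p}(\rvx^i|\hat{\rvz})/\partial\hat{\rvz}^k=\sum_{j'}\bigl(\partial p(\rvx^i|\rvz)/\partial\rvz^{j'}\bigr)(J_{h^{-1}})^{j',k}$.
Note that this involves rows of $J_{h^{-1}}$, not columns of $J_h$.

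Even then it holds only under an extra nondegeneracy condition. One option is linear independence in $\mathcal{L}^2$ of $\{\partial p(\rvx^i|\rvz)/\partial\rvz^{j'}\}_{j'\,:\,G^{i,j'}=1}$. Another is the spanning (``sufficient nonzero-ness'') condition from the structural-sparsity literature. None of the stated hypotheses provides either. The paper asserts the same inclusion by appealing to injectivity of $\Phi$, but injectivity says nothing about linear independence of partial derivatives, so that does not close it either.

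\paragraph{The gap in Step 3.} Suppose $G^{i,j}=1$, $(J_h)^{k,j}\neq 0$, and $\hat{G}^{i,\sigma(j')}=G^{i,j'}$ for all $j'$. These do not force $\hat{G}^{i,k}=1$. The $(i,j)$ entry of $J_{\hat{\Phi}}J_h$ can be carried entirely by the term indexed by $\sigma(j)$.

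\paragraph{How the proof closes with the extra condition.} Add the linear-independence condition and run everything on $J_{h^{-1}}$.
\begin{enumerate}
\item The determinant expansion gives $\sigma$ with $(J_{h^{-1}})^{j,\sigma(j)}\neq 0$. The universal inclusion then yields $G^{i,j}=1\Rightarrow\hat{G}^{i,\sigma(j)}=1$.
\item Condition 3 upgrades this to $\hat{G}^{i,\sigma(j)}=G^{i,j}$ for all $i,j$.
\item Take any $k$ with $(J_{h^{-1}})^{j,k}\neq 0$. Every row $m$ in the index set that condition 2 attaches to $n=j$ has $G^{m,j}=1$, hence $\hat{G}^{m,k}=1$, hence $G^{m,\sigma^{-1}(k)}=1$. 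This forces $\sigma^{-1}(k)=j$.
\end{enumerate}

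Your ``main obstacle'' paragraph points at the right remedy. But it is an additional assumption, not a matter of handling $\mathcal{L}^2$-valued entries carefully, and Steps 1 and 3 should be rewritten around $J_{h^{-1}}$.
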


\begin{examplebox}
\bfsection{Insight:}
Theorem~\ref{thm:submanifold_iden} first reduces the ambiguity from an arbitrary latent representation to an invertible transformation $\hat{\rvz}=h(\rvz)$. Proposition~\ref{thm:variabe_iden} further uses structural sparsity to rule out transformations that mix multiple latent coordinates.
Consequently, even under the misspecified structure, we are able to recover $\rvz$ in a component-wise manner.
\end{examplebox}

\bfsection{Proof Sketch:} The complete proof is deferred to the Appendix~\ref{sec:proof_thm2}. Here we highlight key steps. First, 
the subspace identifiability (Theorem~\ref{thm:submanifold_iden}) imply $\hat{\rvz}=h(\rvz)$, which leads to:
$J_\Phi(\rvz) = J_{\hat{\Phi}}(\hat{\rvz}) J_h(\rvz)$.
Subsequently, by leveraging the injectivity of $\Phi$ in Assumption~\ref{assum:distinct_eigenvalue}, component-wise identifiability is proven by contradiction: any violation would contradict the structural sparsity in assumption \ref{assum:ssparse}.

\begin{examplebox}\bfsection{Assumption Intuitions:}

$\bullet$
Assumption~\ref{assum:independence_z} treats the coordinates of $\rvz$ as distinct latent sources. It provides the additional structure on which the sparsity condition operates.

$\bullet$
Assumption~\ref{assum:ssparse} requires each latent component to be sparsely uniquely localized through the conditional distributions of a subset of observed variables.

$\bullet$
$|\hat{G}_0|\leq |G|$ is the regularization that supports the structural sparsity in Assumption~\ref{assum:ssparse} for learning.
\end{examplebox}

\emph{Assumption~\ref{assum:independence_z} in Proposition~\ref{thm:variabe_iden} claims the independence between each dimension of $\rvz$.
To relax such a constraint, we can alternatively allow latent variables $\rvz$ to exhibit dependence via a known auxiliary domain variable $\rvu$. The details are discussed in Proposition~\ref{thm:varchange_iden} of Sec.~\ref{sec:proof_thm3}.}

%% file: 5_est.tex
\section{Approach}\label{sec:approach}

We now develop an unsupervised estimator for recovering the latent variable $\rvz$ under the data-generating process in Eq.~\ref{eq:dgp}. Our central objective aims to learn $p(\rvx|\rvz)$. However, since $\rvz$ is unobservable from the data, we cannot directly optimize $p(\rvx|\rvz)$ from the data. 
Accordingly, we instead learn 
the joint density in Eq.~\ref{eq:dgp} through formalizing the following:
\begin{align}\label{eq:likelihood}
    p(\rvz,\epsilon,\rvx) & = p_\theta(\rvx|\rvz,\epsilon)p_\gamma(\epsilon|\rvz)p_\delta(\rvz)
\end{align}
where parameters $\theta$ denotes the parameters of $g$. $\gamma$ denotes the parameters of $e$, and $\delta$ parameterizes $p(\rvz)$. To uncover the latent variables $\rvz$ and $\epsilon$ from observed data $\rvx$, we introduce two encoders, $q_{\psi}(\rvz|\rvx)$ and $q_{\phi}(\epsilon|\rvx)$ parameterized by $\psi$ and $\phi$, respectively. 

To the end of learning Eq.~\ref{eq:likelihood}, we build our approach upon the framework of Beta-VAE \citep{betevae_iclr16}.
The overall architecture of our framework is illustrated in Figure~\ref{fig:network}. 
In what follows, we detail each part of our proposed model.

\subsection{Network Design}\label{sec:network}

Learning Eq.~\ref{eq:likelihood} via variational inference suggests the architecture for our approach composing of the following key elements. Specifically, the architecture includes two encoders: $q_\psi(\hat{\rvz}|\rvx)$ for inferring latent variables $\rvz$, and $q_\phi(\hat{\epsilon}|\rvx)$ for estimating the posterior of noise term $\epsilon$. These latent representations are then utilized by a decoder $p_\theta(\hat{\rvx}|\hat{\rvz},\hat{\epsilon})$ to reconstruct the observations $\rvx$. Additionally, we regularize the latent variables by constraining their posterior distributions via the KL divergence to match the learned priors. We detail each of these modules below.

\input{figs/fig_network}

\bfsection{Encoder $q_\psi(\hat{\rvz}|\rvx)$:} 
We parameterize $q_\psi(\hat{\rvz}|\rvx)$ as an isotropic Gaussian characterized by mean $\mu_{\rvz}$ and covariance $\sigma_{\rvz}$. To approximate this posterior, we employ a neural network encoder constructed with an MLP followed by a leaky ReLU activation:
\begin{align}\label{eq:posterior_z}
    \hat{\rvz} \sim \mathcal{N}(\mu_{\rvz}, \sigma_{\rvz}), \quad
    \mu_{\rvz}, \sigma_{\rvz} = \text{LeakyReLU}(\text{MLP}(\rvx))
\end{align}

\bfsection{Encoder $q_\phi(\hat{\epsilon}|\rvx)$:}
Similarly, we parameterize $q_\phi(\hat{\epsilon}|\rvx)$ as another isotropic Gaussian distribution:
\begin{align}\label{eq:posterior_eps}
    \hat{\epsilon} \sim \mathcal{N}(\mu_{\epsilon}, \sigma_{\epsilon}), \quad
    \mu_{\epsilon}, \sigma_{\epsilon} = \text{LeakyReLU}(\text{MLP}(\rvx))
\end{align}

\bfsection{Prior Estimation $p_\delta(\rvz|)$:}
We estimate the prior $p_\delta(\rvz|)$ as a factorized Gaussian across latent dimensions, since we assume the independence of each dimension of $\rvz$ in Assumption~\ref{assum:ind_domains} of Proposition~\ref{thm:variabe_iden}:
\begin{align}
    p_\delta(\rvz)=\prod_{n=1}^N p_\delta(\rvz^n), \quad \rvz^n\sim\mathcal{N}(0, 1)
\end{align}

\bfsection{Prior Estimation $p_\gamma(\epsilon|\rvz)$:}
Direct estimation of the arbitrary density $p_\gamma(\epsilon|\rvz)$ poses a substantial challenge. To overcome this, we introduce a transformation-based module leveraging normalizing flows, representing the prior distribution as a Gaussian transformed via an invertible mapping. 
Suppose each component of $\epsilon$ is independent conditioning on $\rvz$, $\forall m \in [1,M]$, the prior model is formulated through:
$\hat{\eta}^m = \hat{e}^{-1,m}(\hat{\epsilon}^m|\hat{\rvz})$. Using the change-of-variable, the prior distribution of $\hat{\epsilon}^m$ is computed as:
    $p_\gamma(\hat{\epsilon}^m | \hat{\rvz})
    =p(\hat{\eta}^m)\left|\frac{\partial \hat{e}^{-1,m}}{\partial \hat{\epsilon}^m}\right|
    = p_{\gamma}(\hat{e}^{-1,m}(\hat{\epsilon}^m|\hat{\rvz})) \left|\frac{\partial \hat{e}^{-1,m}}{\partial \hat{\epsilon}^m}\right|$.
Aggregating across all dimensions, the complete prior distribution is given by:
\begin{align}\label{eq:prior}
    p_{\gamma}(\hat{\epsilon}| \hat{\rvz}) = \prod_{m=1}^M p(\hat{\eta}^m)\left|\frac{\partial \hat{e}^{-1,m}}{\partial \hat{\epsilon}^m}\right|
\end{align}
The normalizing flow transformation $\hat{e}$ is implemented using a stacked MLP.

\bfsection{Decoder $p_\theta(\hat{\rvx}|\hat{\rvz},\hat{\epsilon})$:}
The decoder generates the reconstructed observations $\hat{\rvx}$ from inferred latent variables $\hat{\rvz}$ and $\hat{\epsilon}$. It is implemented using an MLP followed by leaky ReLU activations:
\begin{align}\label{eq:decoder}
    \hat{\rvx} = \text{LeakyReLU}(\text{MLP}(\hat{\rvz},\hat{\epsilon}))
\end{align}

\subsection{Training Objective}

Our complete ELBO is:
\begin{align}\label{eq:elbo}
    \mathcal{L}_{\text{ELBO}} = \mathbb{E}_{\hat{\rvz}\sim q_{\psi}}[\mathcal{L}_{\text{cond}}] -\beta_1\underbrace{\mathbb{E}_{\hat{\rvz}\sim q_{\psi}} \big(\log q\left(\hat{\rvz}\vert \rvx \right) - \log p_\delta(\rvz)\big)}_{\mathcal{L}_{\text{KLD}}\text{\HS of $\rvz$}} + \mathcal L_{\text{sparse}}
\end{align}
where $\beta_1$ denotes the hyperparameter to penalize the KL divergence of $\rvz$. 
In the following, we explain each part in details.

We first construct the variational lower bound of $p(\rvx|\rvz)$. 
For a fixed $\hat{\rvz}$, we can define:
\begin{align}\label{elbo_cond}
    \mathcal{L}_{\text{cond}}=\mathbb{E}_{\hat{\epsilon} \sim q_{\phi}}\left[\log p_{\theta}(\hat{\rvx}|\hat{\rvz},\hat{\epsilon})\right]-\beta_2\mathbb{E}_{\hat{\epsilon} \sim q_{\phi}} \big(\log q\left(\hat{\epsilon}\vert \rvx \right) - \log p_\gamma(\hat{\epsilon}|\hat{\rvz})\big)
\end{align}
where $\beta_2$ is the hyperparameter that balance the KL divergence of $\epsilon$. 

Additionally, to learn the latent representation $\hat{\rvz}$, we further regularize its approximate posterior $q(\hat{\rvz}|\rvx)$ toward the prior $p_\delta(\rvz)$ using the KL divergence. We then take the expectation of both $\mathcal{L}_{\mathrm{cond}}$ and ${\text{KLD}}\text{\HS of $\rvz$}$ with respect to $q_{\psi}(\hat{\rvz}\mid\rvx)$.

We also design the regularizer $\mathcal L_{\text{sparse}}$ by:
\begin{align}
    \mathcal L_{\text{sparse}}=|\frac{\mathbb{E}_{\hat{\epsilon} \sim q_{\phi}}\left[\partial\log p_{\theta}(\hat{\rvx}|\hat{\rvz},\hat{\epsilon})\right]}{\partial \hat{\rvz}} + \frac{\mathbb{E}_{\hat{\epsilon} \sim q_{\phi}}\left[\partial\log p_{\gamma}(\hat{\epsilon}|\hat{\rvz})\right]}{\partial \hat{\rvz}}|_1 
\end{align}
$|*|_1$ denotes the $\ell_1$ norm that enforces the sparsity regularization in the Regularization~\ref{assum:sparsity}. 
We use the $\log p_{\theta}(\hat{\rvx}|\hat{\rvz},\hat{\epsilon})$ and $\log p_{\gamma}(\hat{\epsilon}|\hat{\rvz})$ instead of the actual $p_{\theta}(\hat{\rvx}|\hat{\rvz})$ is given the fact that
$\frac{\partial p(\rvx^i\mid\rvz)}{\partial\rvz^j}=0
\Longleftrightarrow
\frac{\partial\log p(\rvx^i\mid\rvz)}{\partial\rvz^j}=0$.
where the $\ell_1$ penalty serves as a differentiable surrogate for the $\ell_0$ norm. 
{\it Notably, we do not require any knowledge of the theoretical partitions of $\rvx$ in Definition~\ref{def:misspeci}}.

%% file: figs/fig_network.tex
\begin{wrapfigure}{r}{5cm}
\includegraphics[width=\linewidth]{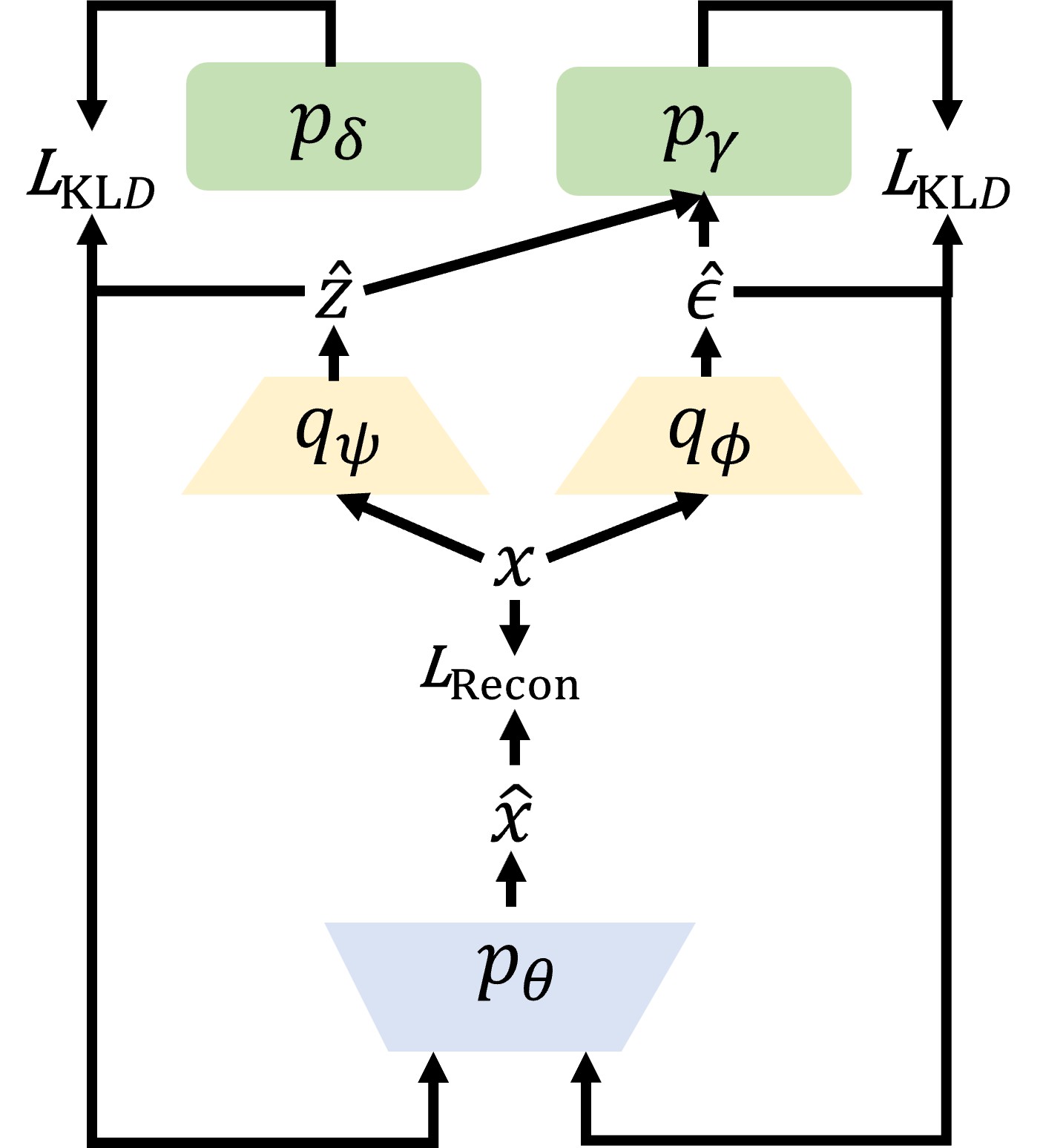}
\caption{The overall framework of our proposed approach 
consists of: (1) two encoders $q_\psi$ and $q_\phi$ that map observations $\rvx_t$ to $\hat{\rvz}$ and $\hat{\epsilon}$, respectively; (2) a decoder that reconstructs observations $\hat{\rvx}$ from $\hat{\rvz}$ and $\hat{\epsilon}$; and (3) two prior estimation modules $p_\delta$ and $p_\gamma$ that models the prior of $\rvz$ and $\epsilon$, respectively.
We train the framework by $L_\text{Recon}$ along with $L_\text{KLD}$.
}
\label{fig:network}
\vspace{-10px}
\end{wrapfigure}

%% file: 6_exp.tex
\section{Experiments}\label{sec:exp}

\subsection{Experimental Setup}

\bfsection{Data:}
We evaluate the precise and approximate identifiability results using synthetic data described in Section~\ref{sec:synthetic}, and the assumption justifications are provided in Section~\ref{sec:assumjust}. 
For precise identifiability, since Proposition~\ref{thm:variabe_iden} is established upon the subspace result of Theorem~\ref{thm:submanifold_iden}, we create a dataset evaluates component-wise recovery of $\rvz$.
For approximate identifiability, we synthesize {\bf{\emph{another}}} dataset satisfying the assumptions required by Theorem~\ref{thm:approx_vareps}. This setting evaluates approximate subspace recovery only, since approximate component-wise identifiability is left for future work.

\bfsection{Metrics:}
For precise identifiability, we use the Mean Correlation Coefficient (MCC) to evaluate component-wise recovery. MCC measures the average absolute correlation between matched coordinates of the true and estimated latent variables. MCC scores range from 0 to 1, with larger values indicating better component-wise identification.

For approximate identifiability, we report the coefficient of determination $R^2$ after aligning $\hat{\rvz}$ with $\rvz$. 
A higher value of $R^2$ corresponds to a smaller discrepancy between $\hat{\rvz}$ and its admissible equivalent $\rvz$ and therefore indicates better approximate subspace identifiability.

\bfsection{Comparison Methods.}
We compare our method with several representative approaches for latent
variable identification. IndVAE~\citep{hu2008instrumental,zheng2025nonparametric}
assumes conditional independence among observations given the latent
variables; its objective is detailed in
Section~\ref{sec:obj_indvae}. MCRL~\citep{sun2025causal} assumes that
the noise term $\epsilon$ is independent of $\rvz$. We additionally
compare against Beta-VAE~\citep{betevae_iclr16},
iVAE~\citep{ivae}, and SLOW-VAE~\citep{slowvae}.
We further conduct ablations to asses the impact of latent-dependent noise and structural sparsity. ``W/O $e$'' removes the dependence $\epsilon=e(\rvz)$ from Eq.~\eqref{eq:dgp}, whereas ``W/O s'' removes the structural-sparsity regularization imposed on $\Phi$. Section~\ref{sec:synexp_imt} provides the implementation details of our approach.

\subsection{Results and Discussions:}

\input{figs/fig_combine}

\bfsection{Precise Identifiability:}
As shown in Figure~\ref{fig:vis}(b), our method achieves the highest MCC among all compared approaches, demonstrating more accurate component-wise recovery of the latent variables under misspecified structure. Figure~\ref{fig:vis}(a) provides a complementary visualization: each true latent coordinate $\rvz^i$ exhibits a clear one-to-one correspondence with a single estimated coordinate $\hat{\rvz}^i$, while the off-diagonal pairs show substantially weaker dependence. This behavior is consistent with component-wise recovery.

The ablation results further illustrate the importance of modeling the data-generating structure. IndVAE performs substantially worse because its conditional-independence assumption is violated by our synthetic data. Removing the latent-dependent noise model (``W/O $e$'') also reduces MCC comparing with our full model, indicating that explicitly modeling $\epsilon=e(\rvz,\eta)$ is important for separating the latent signal from the dependent noise. Also, ``W/O s'' yields lower MCC than the full model, supporting the impact of the structural-sparsity regularization.

\input{figs/fig_z}

\bfsection{Approximate Identifiability:}
We use the same model architecture and the same training objective as in the precise-identifiability experiment; only $\overline{Per}<\eta$ does not hold in the the synthetic data. This setting therefore evaluates whether the same estimator can still recover a representation close to the admissible equivalence class under the approximate identifiability regime.

Figure~\ref{fig:RR} reports the $R^2$ scores. Our method achieves the highest $R^2$ among all compared approaches, indicating that $\hat{\rvz}$ retains the strongest correspondence with $\rvz$ in the approximate-identifiability regime.
This is consistent with Theorem~\ref{thm:approx_vareps}.

The comparison with IndVAE, MCRL, Beta-VAE, iVAE and SLOW-VAEfurther shows that explicitly modeling the misspecified
latent-dependent-noise process improves latent recovery even outside the precise-identifiability regime. In particular, ``W/O $e$'' obtains a lower $R^2$ than the complete model, showing that modeling the dependence between $\epsilon$ and $\rvz$ remains beneficial for approximate subspace recovery. ``W/O s'' also decreases $R^2$, suggesting the benefit from structural sparsity.

{\bf\emph{Also, we postpone discussing the real-world experiments, and more ablation studies in Section~\ref{sec:exp_u}}}.

%% file: figs/fig_combine.tex
\begin{figure*}
\centering
\includegraphics[width=\linewidth]{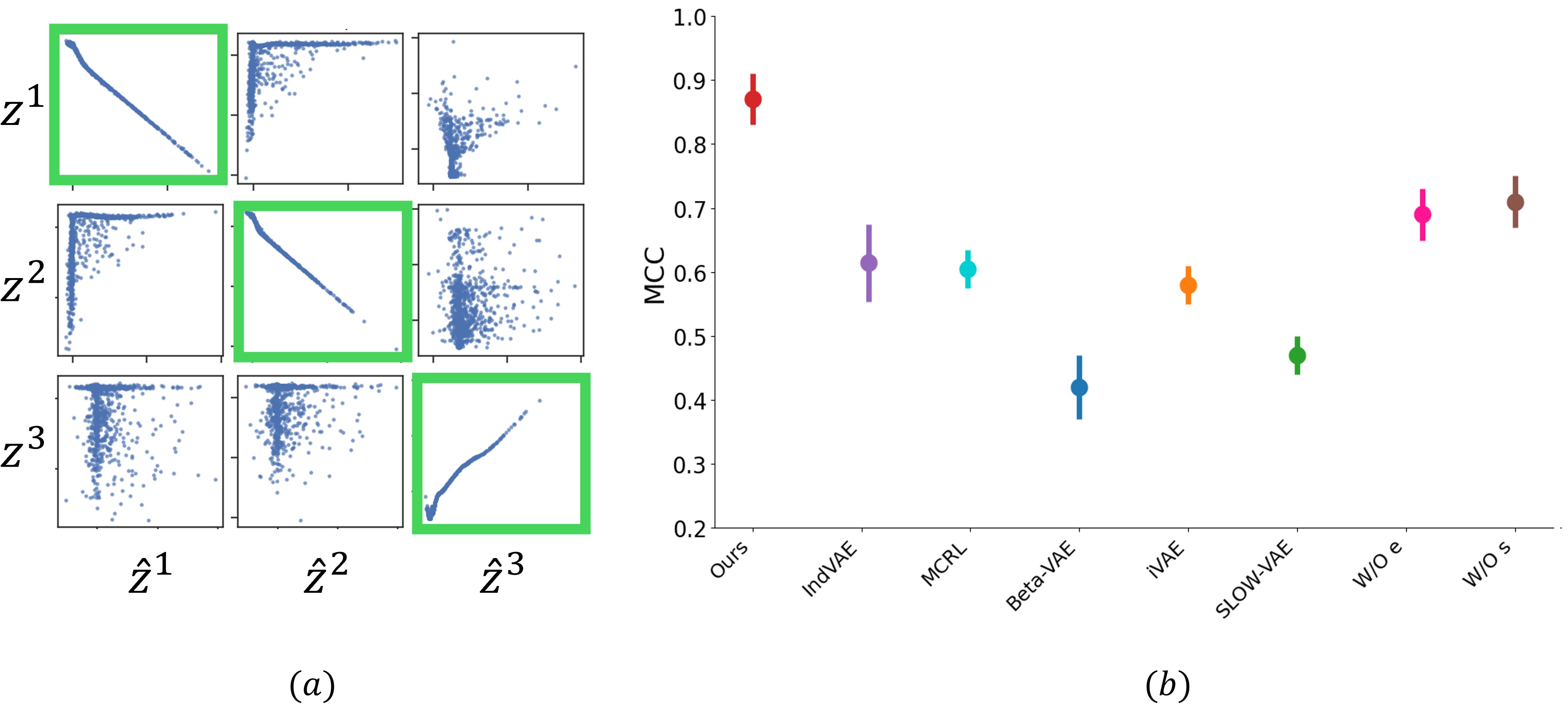}
\caption{
(a) Visualization of the correlations between each component of true latent variables ($\rvz^i$) and their corresponding component of estimated latent variables ($\hat{\rvz}^i$) using our approach. 
The green bounding boxes highlight the components that are identified. (b) Mean Correlation Coefficient (MCC) scores comparing our framework with state-of-the-art approaches, including IndVAE, MCRL, BetaVAE, iVAE, and SlowVAE, as well as the ablation baselines W/O $e$ and W/O $s$. 
}
\label{fig:vis}
\vspace{-20px}
\end{figure*}

%% file: figs/fig_z.tex
\begin{wrapfigure}{r}{8.0cm}
\vspace{-8px}
\includegraphics[width=8.0cm]{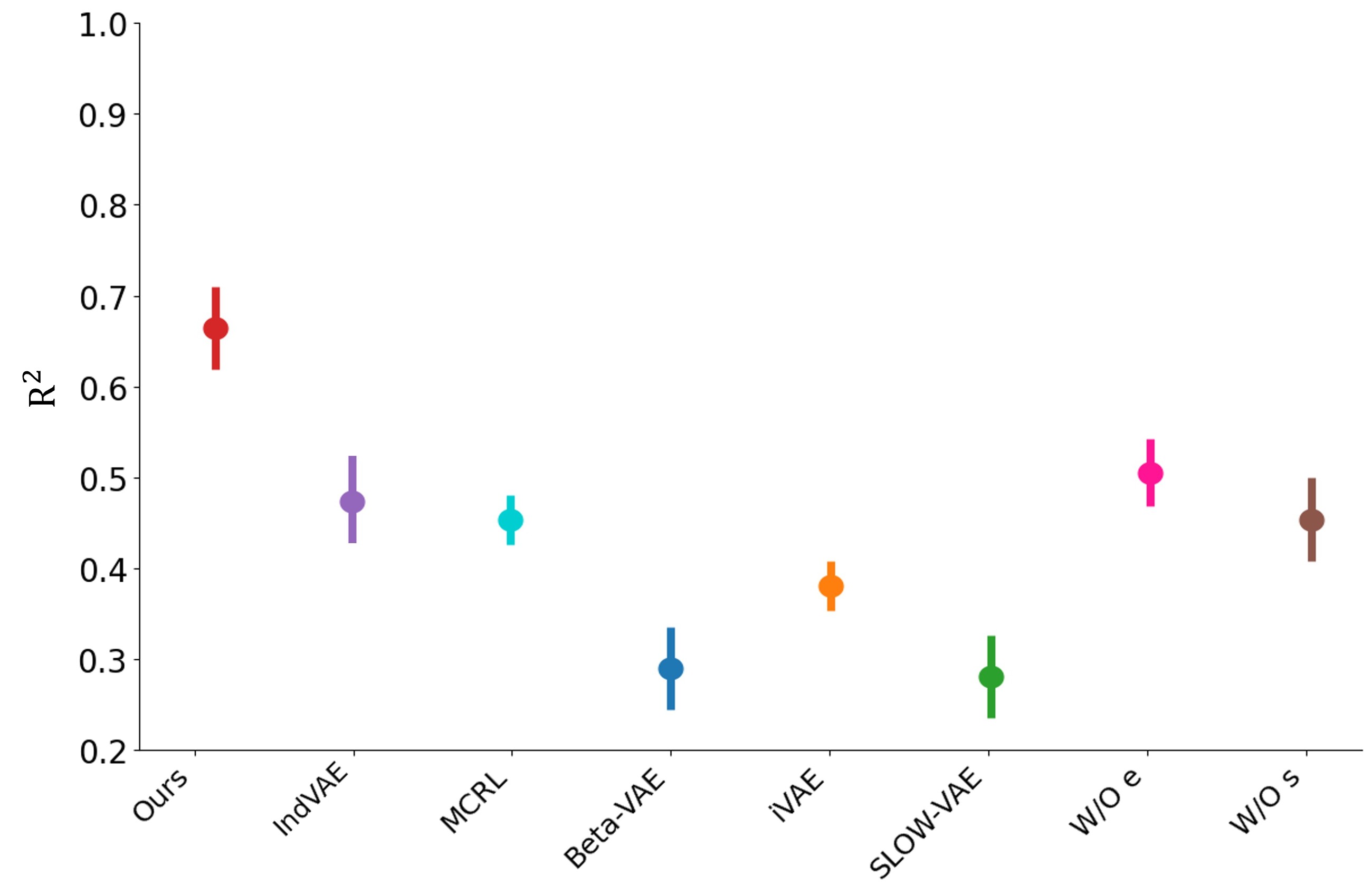}
\caption{
$R^2$ scores for comparison methods.
}
\label{fig:RR}
\vspace{-7px}
\end{wrapfigure}

%% file: 7_appendix.tex
\newpage
\appendix

\section{Notions}

\begin{table}[h]
\centering
\small
\def\arraystretch{1.3}
\begin{tabular}{l c c c}
\multicolumn{4}{c}{\textit{Table of notions}} \\
\toprule
\multicolumn{4}{c}{\textit{Variables}} \\
\toprule
$\displaystyle \rvx\in\mathbb{R}^K$ & Observations &
$\displaystyle \hat{\rvx}\in\mathbb{R}^K$ & Reconstructions\\
$\displaystyle \kappa$ & $||L_{\rvx_\rva|\rvz}||_{op}||L^{-1}_{\rvx_\rva|\rvz}||_{op}$ &
$\displaystyle \varepsilon$ & Identifiability error
\\
$\displaystyle \rvz\in\mathbb{R}^N$ & Latent variables &
$\displaystyle \hat{\rvz}\in\mathbb{R}^N$ & Latent variable estimations\\
$\displaystyle \epsilon$ & True dependent noise term &
$\displaystyle \hat{\epsilon}$ & Estimation of $\epsilon$\\
$\displaystyle \eta$ & Auxiliary variable &
$\displaystyle \hat{\eta}$ & Estimation of $\eta$\\
\midrule
\multicolumn{4}{c}{\textit{Indices}} \\
\midrule
$\displaystyle \{\rva,\rvb,\rvc\}$ & The indices of partitions of $\rvx$ & 
$\displaystyle \{A, B, C\}$ & The indices of partitions of $\rvx'$
\\
$\displaystyle n\in [N]$ & The indices of $\rvz$ & 
$\displaystyle \hat{n}\in [N]$ & The indices of $\hat{\rvz}$\\
$\displaystyle i$ & The index of $\rho_\Lambda$ and $\rho$ & 
$\displaystyle j$ & The index of $\rho_\Lambda$ and $\rho$ \\
\midrule
\multicolumn{4}{c}{\textit{Operators}} \\
\midrule
$\displaystyle L$ & Integral linear operator &
$\displaystyle \Lambda$ & Multiplication operator \\
$\displaystyle {\rho_\Lambda}$ & The eigenvalue of$L_{\rvx_\rva|\rvz}L_{\rvx_\rvb|\rvz}L^{-1}_{\rvx_\rva|\rvz}$ &
$\displaystyle \rho$ & The eigenvalue of $L_{\rvx_\rva,\rvx_\rvb|\rvx_\rvc}L^{-1}_{\rvx_\rva|\rvx_\rvc}$ \\
$Per$ & Perturbation operator &
$\overline{Per}$ & The upper bound of $||Per||_{op}$ \\
\midrule
\multicolumn{4}{c}{\textit{True \& learned model}} \\
\midrule
$\displaystyle g$ & True mixing function &
$\displaystyle \hat{g}$ & Learned mixing function \\
$\displaystyle e $ & True function of $\epsilon$ &
$\displaystyle \hat{e}$ & Learned function of $\epsilon$\\
$J_\Phi$ & The jacobian of $_\Phi$ &
$J_{\hat{_\Phi}}$ & The jacobian of $\hat{g_\Phi}$ \\
\midrule
\multicolumn{4}{c}{\textit{Optimizations}} \\
\midrule
$\displaystyle \psi$ & Parameters of posterior $q_\psi(\rvz|\rvx)$ &
$\displaystyle \phi$ & Parameters of posterior $q_\phi(\epsilon|\rvx)$ \\
$\displaystyle \delta$ & Parameters of prior $p_\delta(\rvz)$ &
$\displaystyle \gamma$ & Parameters of prior $p_\gamma(\epsilon|\rvz)$ \\
$\displaystyle \theta$ & Parameters of decoder $p_\theta(\rvx|\rvz)$ &
$\displaystyle \vert * \vert_{1}$ & $l_1$ norm of $*$ \\
\bottomrule
\end{tabular}
\end{table}

\section{Proof of Theorems}

\subsection{Previous Results}
\label{sec:preliminaries}

In this section, we recapitulate and summarize the previous results from \citep{hu2008instrumental,zheng2025nonparametric}. 

{\it

For $\rvx'=\{\rvx'_A, \rvx'_B, \rvx'_C\}$, where $\{\rvx'_A, \rvx'_B, \rvx'_C\}$ denote the three disjoint partitions, $ p(\rvx'|\rvz') = p(\rvx'_A|\rvz')p(\rvx'_B|\rvz')p(\rvx'_C|\rvz')$ \footnote{\bf \emph We emphasize that the partition in both previous and our work is an existential structural condition for the identifiability analysis. We do not require the partition labels, nor separate access to $\rvx_\rva$, $\rvx_\rvb$, and $\rvx_\rvc$, during learning and inference.};

Consider observed variables $\rvx' \in \mathbb{R}^K$ and the estimated latent variables $\hat{\rvz}'\in\mathbb{R}^N$, suppose that there exist functions $\hat{g}'$ and $\hat{e}'$, 
and the following assumptions hold:
\begin{enumerate}
    \item The joint distribution of $(\rvx, \rvz)$ admits a bounded density with respect to a suitable product measure defined on their supports. Furthermore, all marginal and conditional densities derived from this joint distribution are also bounded;
    \item The operators $L_{\rvx'_A|\rvz'}$ and $L_{\rvx'_A|\rvx'_B}$ are injective;
    \item $\forall \underline{\rvz}'\neq \bar{\rvz}', p(\rvx'_C;\underline{\rvz}')\neq p(\rvx'_C;\bar{\rvz}')$;
\end{enumerate}
then $\rvz'$ must be identified up to an invertible transformation $h'$.
}

\bfsection{Proof:}
Given Assumption {\it{1}}, we can obtain the following:  \begin{align} 
p_{\rvx'_C \rvx'_A | \rvx'_B}(\rvx'_C, \rvx'_A | \rvx'_B) & = \int p_{\rvx'_C \rvx'_A \rvz' | \rvx'_B}(\rvx'_C, \rvx'_A, \rvz' | \rvx'_B) d\rvz \nonumber\\ & = \int p_{\rvx'_C | \rvx'_A \rvz' \rvx'_B}(\rvx'_C | \rvx'_A, \rvz', \rvx'_B) p_{\rvx'_A \rvz' | \rvx'_B}(\rvx'_A, \rvz' | \rvx'_B) d\rvz \nonumber\\ & = \int p_{\rvx'_C | \rvx'_A \rvz'}(\rvx'_C | \rvx'_A, \rvz') p_{\rvx'_A \rvz' | \rvx'_B}(\rvx'_A, \rvz' | \rvx'_B) d\rvz \nonumber\\ & = \int p_{\rvx'_C | \rvx'_A \rvz'}(\rvx'_C | \rvx'_A, \rvz') p_{\rvx'_A | \rvz' \rvx'_B}(\rvx'_A | \rvz', \rvx'_B) p_{\rvz' | \rvx'_B}(\rvz' | \rvx'_B) d\rvz \nonumber\\ & = \int p_{\rvx'_C | \rvx'_A \rvz'}(\rvx'_C | \rvx'_A, \rvz') p_{\rvx'_A | \rvz'}(\rvx'_A | \rvz') p_{\rvz' | \rvx'_B}(\rvz' | \rvx'_B) d\rvz \nonumber\\ & = \int p_{\rvx'_C | \rvz'}(\rvx'_C | \rvz') p_{\rvx'_A | \rvz'}(\rvx'_A | \rvz') p_{\rvz' | \rvx'_B}(\rvz' | \rvx'_B) d\rvz 
\end{align} 

Leveraging this eqation and Definition~\ref{def:integral_operator}, we can derive the following operator: 
\begin{align} 
(L_{\rvx'_C;\rvx'_A|\rvx'_B} f')(\rvx'_A) &= \int \int p_{\rvx'_A | \rvz'}(\rvx'_A | \rvz') p_{\rvx'_C | \rvz'}(\rvx'_C | \rvz') p_{\rvz' | \rvx'_B}(\rvz' | \rvx'_B) f'(\rvx'_B) d\rvx'_B d\rvz \nonumber\\ & = \int p_{\rvx'_A | \rvz'}(\rvx'_A | \rvz') (\Lambda_{\rvx'_C;\rvz'} L_{\rvz'|\rvx'_B} f')(\rvz') d\rvz \nonumber\\ & = (L_{\rvx'_A|\rvz'} \Lambda_{\rvx'_C;Z} L_{\rvz'|\rvx'_B} f')(\rvx'_A) 
\end{align} 

The above equation indicates: 
\begin{align} \label{eq:oper_eq}
L_{\rvx'_C;\rvx'_A|\rvx'_B} = L_{\rvx'_A|\rvz'}\Lambda_{\rvx'_C;\rvz'} L_{\rvz'|\rvx'_B} 
\end{align} 
This equivalence holds over some functions space $ \mathcal{G}(\mathcal{Z}) $, given the factorization properties of the conditional densities established earlier.

Now, integrating over $ \rvx'_C $, by using the fact that: $\int L_{\rvx'_C;\rvx'_A|\rvx'_B} f'(\rvx'_C) d\rvx'_C = L_{\rvx'_A|\rvx'_B} f'$ we can obtain: 
\begin{align} 
L_{\rvx'_A|\rvx'_B} f'(\rvx'_A) & = \int p_{\rvx'_A | \rvx'_B}(\rvx'_A | \rvx'_B) f'(\rvx'_B) d\rvx'_B \nonumber\\ & = \int\int p_{\rvx'_A | \rvz', \rvx'_B}(\rvx'_A |\rvz', \rvx'_B)p_{\rvz'|\rvx'_B}(\rvz'|\rvx'_B) f'(\rvx'_B) d\rvx'_B d\rvz \nonumber\\ & = \int p_{\rvx'_A | \rvz'}(\rvx'_A |\rvz')[L_{\rvz'|\rvx'_B} f'] d\rvz \nonumber\\ & = L_{\rvx'_A|\rvz'}L_{\rvz'|\rvx'_B} f'](\rvx'_A)
\end{align} 
where the second equation leverages $ \rvx'_A \perp \rvx'_B \mid \rvz'$. By assuming the inejctivity of $ L_{\rvx'_A|\rvz'} $ in Assumption {\it{2}}, we can arrive at $L_{Z|\rvx'_B} = L_{\rvx'_A|\rvz'}^{-1} L_{\rvx'_A|\rvx'_B}$. Substitute this to Eq. 15: 
\begin{align} 
L_{\rvx'_C;\rvx'_A|\rvx'_B}L_{\rvx'_A|\rvx'_B}^{-1} = L_{\rvx'_A|\rvz'} \Lambda_{\rvx'_C;Z} L_{\rvx'_A|\rvz'}^{-1}
\end{align} 
where the LHS involves only observable variables, and the RHS explicitly depends on the latent variable $\rvz'$.
$\Lambda_{\rvx'_C;\rvz'}$ determines the eigenvalues of $L_{\rvx'_A | Z}\Lambda_{\rvx'_C;\rvz'}L^{-1}_{\rvx'_A | \rvz'}$, whose diagonal entries correspond to the conditional distributions $p(\rvx'_C | \rvz')$. Each $\rvz'$ indexing a distinct conditional distribution of $p(\rvx'_C | \rvz')$.
Under Assumption {\it{3}}, where $p(\rvx'_C \mid \rvz')$ are distinct for different values of $\rvz'$, the eigenvalues are distinct. This allows a bijective mapping $h': \mathcal{Z} \to \mathcal{Z}$ to permute $\rvz'$ while preserving the values of $p(\rvx'_C \mid \rvz')$. Therefore, the latent variable can only be recovered up to such a permutation, i.e., $\hat{\rvz'} = h'(\rvz')$, which yields the identifiability up to an invertible transformation $h'$.

\subsection{Proof of Proposition 1 and Theorem 1}\label{proof:prop1}

\bfsection{Proposition 1:}
Suppose Assumptions~\ref{assum:error} $\sim$ \ref{assum:distinct_eigenvalue} hold, and if both $\varrho^i$ and $\varrho^i_\Lambda$ include only $\rho^i$ and $\rho_\Lambda^i$, respectively. Then, for all $i>1$ and for all $\rvz$,
\begin{align}\label{eq:rho_1}
    \rho^i_\Lambda\in \left(\max\{\rho^i-\kappa\overline{\it{Per}}, \rho^{i-1}+\kappa\overline{\it{Per}}\}, \rho^i+\kappa\overline{\it{Per}}\right)
\end{align}
and if $i=1$, for all $\rvz$,
\begin{align}\label{eq:rho_2}
    \rho^i_\Lambda\in (\rho^i-\kappa\overline{\it{Per}}, \rho^i+\kappa\overline{\it{Per}})
\end{align}
guarantee the distinct spectral values of $L_{\rvx_\rva|\rvz}\Lambda_{\rvx_\rvb|\rvz}L^{-1}_{\rvx_\rva|\rvz}$.

\bfsection{Proof:}


Recall that we define:
\begin{align}
    \eta = \min_{i\neq j}\frac{|\rho^i - \rho^j|}{2\kappa} - \alpha > 0 \quad \text{and} \quad \overline{Per} < \eta,
\end{align}
where $\alpha>0$ denotes constant, and $\kappa=||L_{\rvx_\rva|\rvz}||_{op}||L^{-1}_{\rvx_\rva|\rvz}||_{op}$.
Notably, $\kappa\geq 1$,
since $||L_{\rvx_\rva|\rvz}L^{-1}_{\rvx_\rva|\rvz}||_{op}=||I||_{op}\leq ||L_{\rvx_\rva|\rvz}||_{op}||L^{-1}_{\rvx_\rva|\rvz}||_{op}$, where $I$ denotes the identity operator.

Suppose that two distinct spectral values, denoted $\rho^1_\Lambda$ and $\rho^2_\Lambda$, fall within the same interval $(\rho^i - \kappa\overline{Per}, \rho^i + \kappa\overline{Per})$. we have:
\begin{align}
    |\rho^1_\Lambda - \rho^i| &\leq \kappa\overline{Per} < |\rho^i - \rho^j|, \nonumber\\
    |\rho^2_\Lambda - \rho^i| &\leq \kappa\overline{Per} < |\rho^i - \rho^j|.
\end{align}

Applying the triangle inequality, we obtain:
\begin{align}\label{eq:rho_3}
    |\rho^1_\Lambda - \rho^2_\Lambda| \leq |\rho^1_\Lambda - \rho^i| + |\rho^2_\Lambda - \rho^i| < |\rho^i - \rho^j| = 2\kappa\eta.
\end{align}

However, inequality \ref{eq:rho_3} implies:
\begin{align}
    \frac{|\rho^1 - \rho^2|}{2\kappa} < \eta,
\end{align}
which directly contradicts the definition of $\eta$. Thus, the uniqueness of each spectral value $\rho^i_\Lambda$ within its respective interval is proved.


We reorganize the Theorem~1 below:

{\it\bfsection{Theorem 1}
Consider observed variables $\rvx \in \mathbb{R}^K$ and the estimated latent variables $\hat{\rvz}\in\mathbb{R}^N$, suppose 
the following assumptions hold:
\begin{enumerate}[i]
    \item\label{assum:bound} The joint distribution of $(\rvx, \rvz)$ admits a bounded density with respect to a suitable product measure defined on their supports. Furthermore, all marginal and conditional densities derived from this joint distribution are also bounded;
    \item\label{assum:subiden_ijc_supp} The operators $L_{\rvx_\rva|\rvz}$ and $L_{\rvx_\rva|\rvx_\rvc}$ are injective and bounded-below;
    \item\label{assum:subiden_car_supp} $L_{\rvx_\rva,\rvx_\rvb|\rvx_\rvc}L^{-1}_{\rvx_\rva|\rvx_\rvc}$ has distinct spectral values with cardinality equal to that of $\Lambda_{\rvx_\rvb|\rvz}$.  Also,  Let $\Theta:\mathcal{Z}\rightarrow \mathcal{D}, s.t. p(\rvx_\rvb|\rvz)=\Theta(\rvz)$ denotes the map from $\mathcal{Z}$ to $\mathcal{D}$. $\Theta$ is injective;
    \item\label{assum:M_appendix} There exists an operator $M$ such that $M(L_{\rvx_\rvb|\rvz})=M(L_{\rvx_\rvb|\tilde{h}(\rvz)})=t(\rvz)$, where $t$ is a differentiable transformation.
\end{enumerate}
then for $\tilde{h}\in\tilde{\mathcal{H}}$ and $t\in\mathcal{T}$ $(\tilde{\mathcal{H}}\cap\mathcal{T}\neq\emptyset)$, 
if $h \in \tilde{\mathcal{H}}\cap\mathcal{T}\Rightarrow \hat{\rvz}=h(\rvz)=\tilde{h}(\rvz)=t(\rvz)$.
In other words, $\rvz$ must be subspace identified.
}

\bfsection{Proof:}\label{proof:thm4}

Our goal is to demonstrate subspace identifiability of $\rvz$ from the data generated process in Eq.~\ref{eq:dgp}. To such an end, we proceed in the following steps:

\textit{Step 1: Operator Construction and Spectral Decomposition.}

Our approach follows the previous results in Sec.~\ref{sec:preliminaries}, and proceeds to decompose the bounded linear operator $L_{\rvx_\rva,\rvx_\rvb|\rvx_\rvc}$.  
The structural misspecification in Definition~\ref{def:misspeci} violates the conditional independence by $p(\rvx|\rvz)\neq p(\rvx_\rva|\rvz)p(\rvx_\rvb|\rvz)p(\rvx_\rvc|\rvz)$, thus introduces a discrepancy between $L_{\rvx_\rva,\rvx_\rvb|\rvx_\rvc}$ and $L_{\rvx_\rva|\rvz}L_{\rvx_\rvb|\rvz}L_{\rvz|\rvx_\rvc}$ according to Eq.~\ref{eq:oper_eq}. 
Consequently, we can define the difference operators:
\begin{align}\label{eq:error_oper_1}
    & \rvd(\rvx_\rvb) = L_{\rvx_\rva,\rvx_\rvb|\rvx_\rvc} - L_{\rvx_\rva|\rvz}L_{\rvx_\rvb|\rvz}L_{\rvz|\rvx_\rvc}, \nonumber \\
    & \rvd = \int \rvd(\rvx_\rvb) d\rvx_\rvb = L_{\rvx_\rva|\rvx_\rvc} - L_{\rvx_\rva|\rvz}L_{\rvz|\rvx_\rvc}.
\end{align}

Assumption~\ref{assum:bound} guarantees boundedness of each term in Eq.~\ref{eq:error_oper_1}. Leveraging these, we rewrite:
\begin{align}\label{eq:error_derivation_appendix}
    L_{\rvx_\rva|\rvz}\Lambda_{\rvx_\rvb|\rvz}L^{-1}_{\rvx_\rva|\rvz}
    &= (L_{\rvx_\rva,\rvx_\rvb|\rvx_\rvc} - \rvd(\rvx_\rvb))(L_{\rvx_\rva|\rvx_\rvc} - \rvd)^{-1} \nonumber\\
    &= L_{\rvx_\rva,\rvx_\rvb|\rvx_\rvc}L^{-1}_{\rvx_\rva|\rvx_\rvc} + \it Per,
\end{align}
where $\it Per$ represents the perturbation term arising from the violation of conditional independence.

\textit{Step 2: Spectral resolution of $L_{\rvx_\rva|\rvz}\Lambda_{\rvx_\rvb|\rvz}L^{-1}_{\rvx_\rva|\rvz}$}

At this point, we aim to understand if $L_{\rvx_\rva|\rvz}\Lambda_{\rvx_\rvb|\rvz}L^{-1}_{\rvx_\rva|\rvz}$ is unique spectral decomposition of RHS of Eq.~\ref{eq:error_derivation_appendix} with $Per\neq 0$.
We take inspiration from \citep{dunford1971linear} to proceed with the necessary and sufficient condition by using the projection-valued measure in the following steps. 

Assumption~\ref{assum:bound} along with the fact that $p(\rvx_\rvb|\rvz)$ being real-valued 
suggest that the multiplication operator $\Lambda_{\rvx_\rvb|\rvz}$ is bounded self-adjoint, and therefore admits a unique orthogonal projection-valued measure $E_{\rvx_\rvb}$ such that 
\begin{align} \Lambda_{\rvx_\rvb|\rvz}=\int_{\sigma(\Lambda_{\rvx_\rvb|\rvz})}\lambda\,dE_{\rvx_\rvb}(\lambda)
\end{align}
where $\sigma$ denotes the spectral value, $(E_{\rvx_\rvb}(\Delta)f)(\rvz)=\mathbf 1_{\{p(\rvx_\rvb\mid \rvz)\in\Delta\}}f(\rvz)$ for every Borel set $\Delta\subseteq\mathbb R$, $1$ denotes the indicator and $f$ denotes some function. 

For $L_{\rvx_\rva|\rvz}\Lambda_{\rvx_\rvb|\rvz}L^{-1}_{\rvx_\rva|\rvz}$, define the projection-valued measure $P_{\rvx_\rvb}$ by:
\begin{align}
    P_{\rvx_\rvb}(\Delta)\coloneqq L_{\rvx_\rva|\rvz} E_{\rvx_\rvb}(\Delta)L_{\rvx_\rva|\rvz}^{-1}
\end{align}
Given the properties of projection-valued measure, we can obtain: 
$P_{x_b}(\mathbb R)=I_{\mathcal K}$ ($\mathcal K$ denotes the range of $L_{\rvx_\rva|\rvz}$). These equations yield 
\begin{align}\label{eq:def_pvm}
    L_{\rvx_\rva|\rvz}\Lambda_{\rvx_\rvb|\rvz}L^{-1}_{\rvx_\rva|\rvz}=\int_{\sigma(L_{\rvx_\rva|\rvz}\Lambda_{\rvx_\rvb|\rvz}L^{-1}_{\rvx_\rva|\rvz})}\lambda\,dP_{\rvx_\rvb}(\lambda)
\end{align} 
with $\sigma(L_{\rvx_\rva|\rvz}\Lambda_{\rvx_\rvb|\rvz}L^{-1}_{\rvx_\rva|\rvz})=\sigma(\Lambda_{\rvx_\rvb|\rvz})$. Moreover, $\operatorname{Ran}P_{\rvx_\rvb}(\Delta)=L_{\rvx_\rva|\rvz}\operatorname{Ran}E_{x_b}(\Delta)$ and $\ker P_{x_b}(\Delta)=L_{\rvx_\rva|\rvz}\operatorname{Ran}E_{x_b}(\mathbb R\setminus\Delta)$, where $Ran$ denote the range. Hence, $P_{\rvx_\rvb}(\Delta)$ is uniquely determined by its range and null space. 



Since $E_{x_b}(\Delta)$ is a spectral projection, the Hilber space $\mathcal H_z$ of $\Lambda_{\rvx_b|\rvz}$ is
$\mathcal H_z=
\operatorname{Ran}E_{\rvx_\rvb}(\Delta)\oplus
\operatorname{Ran}E_{\rvx_\rvb}(\mathbb R\setminus\Delta)$.
By invertibility of $L_{\rvx_a\mid \rvz}$,
$$
\mathcal K=
\operatorname{Ran}P_{\rvx_\rvb}(\Delta)
\oplus
\ker P_{\rvx_\rvb}(\Delta).
$$
Suppose
$w\in\operatorname{Ran}P_{x_b}(\Delta)\cap\ker P_{x_b}(\Delta)$.
Since $w\in\operatorname{Ran}P_{x_b}(\Delta)$, there exists some $a\in\mathcal K$ such that
$w=P_{x_b}(\Delta)a$. Hence,
$
P_{x_b}(\Delta)w
=
P_{x_b}(\Delta)^2a
=
P_{x_b}(\Delta)a
=
w.
$
On the other hand, since $w\in\ker P_{x_b}(\Delta)$,
$P_{x_b}(\Delta)w=0$. Therefore $w=0$, implying
$$
\operatorname{Ran}P_{x_b}(\Delta)\cap
\ker P_{x_b}(\Delta)=\{0\}.
$$
Hence, every $f\in\mathcal K$ admits a unique decomposition
$f=u+v$, where
$u\in\operatorname{Ran}P_{\rvx_\rvb}(\Delta)$ and
$v\in\ker P_{\rvx_\rvb}(\Delta)$, 
$\Rightarrow P_{\rvx_\rvb}(\Delta)f=u$.
Therefore, any projection with the same range and null space must map $f=u+v$ to its unique range component $u$, and is therefore identical to $P_{\rvx_\rvb}(\Delta)$. 
Thus, for a fixed data generating process in Eq.~\ref{eq:dgp}, Eq.~\ref{eq:error_derivation_appendix} admits an unique spectral projections. 

\textit{Step 3: Connecting Bijection $\tilde{h}$ with Differentiable Transformation $h$.}

Proposition~\ref{prop:eigenvalue} has proven the distinctness of $\rho_\Lambda$, which is determined by $p(\rvx_\rvb|\rvz)$. In combination of the injectivity map assumption in Assumption~\ref{assum:subiden_car_supp}, $p(\rvx_\rvb|\rvz)$ corresponds to a particualr $\rvz$. 
Since $\rho_\Lambda$ is the essential range of $p(\rvx_\rvb|\rvz)$, it remains invariant by re-labeling through permuting $\rvz$. This can be formally described by $\rvz'=h'(\rvz)$, where $h'$ denotes a bijection $h':\mathcal{Z}\rightarrow\mathcal{Z}$, and $\rvz'$ is a permuted version of $\rvz$.

Definition~\ref{def:subspace_iden} requires the invertible transformation $h$ to be differentiable. However, $\tilde{h}$ alone may not satisfy this differentiability constraint. To resolve this, we invoke Assumption~\ref{assum:M_appendix}, which guarantees:
$$
M(p(\rvx_\rvb|\rvz)) = M(p(\rvx_\rvb|h'(\rvz))) = h(\rvz)
$$
where is $h(\rvz)=\hat{\rvz}$ iff $h$ is differentiable.. 
Hence, we conclude the subspace identifiability of $\rvz$.

\subsection{Proof of Theorem~2}\label{sec:approxiden}




We reorganize the Theorem~2 below:

{\it\bfsection{Theorem 2}
Consider observed variables $\rvx \in \mathbb{R}^K$ and the estimated latent variables $\hat{\rvz}\in\mathbb{R}^N$, suppose 
the following assumptions hold:
\begin{enumerate}[i]
    \item\label{assum:bound} The joint distribution of $(\rvx, \rvz)$ admits a bounded density with respect to a suitable product measure defined on their supports. Furthermore, all marginal and conditional densities derived from this joint distribution are also bounded;
    \item\label{assum:subiden_ijc_supp} The operators $L_{\rvx_\rva|\rvz}$ and $L_{\rvx_\rva|\rvx_\rvc}$ are injective and bounded-below;
    \item\label{assum:subiden_car_supp} $L_{\rvx_\rva,\rvx_\rvb|\rvx_\rvc}L^{-1}_{\rvx_\rva|\rvx_\rvc}$ has distinct spectral values with cardinality equal to that of $\Lambda_{\rvx_\rvb|\rvz}$. Also,  Let $\Theta:\mathcal{Z}\rightarrow \mathcal{D}, s.t. p(\rvx_\rvb|\rvz)=\Theta(\rvz)$ denotes the map from $\mathcal{Z}$ to $\mathcal{D}$. $\Theta$ is injective;
    \item\label{assum:lipschitz} $\mathcal Z$ is compact, and $\Theta$ in Assumption~\ref{assum:subiden_car_supp} is continuously differentiable. There exists $c_{\Theta}>0$ such that, for an admissible
    invertible transformation $h$, $\Theta$ satisfies the lower bound of Bi-Lipschitz continuity by:
    \begin{align}
    c_{\Theta}
    |\hat{\mathbf{z}}-h(\mathbf{z})|
    \leq
    \|\Theta(\mathbf{z})
    -\widehat{\Theta}(\hat{\mathbf{z}})\|_{\mathcal F}.
    \label{eq:inverse_lipschitz_thm2}
    \end{align}
    where $\mathcal F$ denotes the function norm.
\end{enumerate}
Then we can obtain $\varepsilon \leq \frac{\kappa\overline{Per} + \kappa\widehat{\overline{Per}}}{c_\Theta}$.
}


\bfsection{Proof:}


Following Steps 1 and 2 in the proof of Theorem~\ref{thm:submanifold_iden}, for each fixed $\rvx_\rvb$, the multiplication operator $\Lambda_{\rvx_\rvb\mid \rvz}$ is given by
\begin{align}
(\Lambda_{\rvx_\rvb\mid \rvz}f)(\rvz)=p(\rvx_\rvb\mid \rvz)f(\rvz).
\nonumber
\end{align}
Since $\Theta$ is continuous, $\forall\rvz\in\mathcal Z, p(\rvx_\rvb|\rvz)\in ess\, ran_{\rvz} p(\rvx_\rvb|\rvz)=\rho_\Lambda$, where $ess\, ran$ denotes the essential range.

Given $dist(\rho_\Lambda , \rho)\leq \kappa\overline{Per}$ in Eq.~\ref{eq:bound}, we can define a similar bound by $dist(\hat{\rho}_\Lambda , \rho)\leq \hat{\kappa}\widehat{\overline{Per}}$. Therefore, there always exist $i$, such that:
\begin{equation}
    |\rho^i_\Lambda-\rho^i_{\hat{\Lambda}}| =|\rho^i_\Lambda - \rho^i + \rho^i -\rho^i_{\hat{\Lambda}}|
    \leq |\rho^i_\Lambda - \rho^i| + |\rho^i_{\hat\Lambda}-\rho^i| 
    \leq \kappa\overline{Per} + \hat{\kappa}\widehat{\overline{Per}}
\end{equation}\label{eq:bound_rho}
    
where the first $\leq$ is because of the triangle inequality. Eq.~\ref{eq:bound_rho} suggests that, for almost all $\rvz,\hat{\rvz}$:
\begin{align}
    |p^i(\rvx_\rvb|\rvz) - p^i(\rvx_\rvb|\hat{\rvz})|\leq \kappa\overline{Per} + \hat{\kappa}\widehat{\overline{Per}}
\end{align}

Therefore, Taking the essential supremum over $\rvx_\rvb$ gives
\begin{align}
\|\Theta(\rvz)-\widehat{\Theta}(\hat \rvz)\|_{\mathcal F}
&=
ess\,sup_{\rvx_\rvb}
|p^i(\rvx_\rvb\mid \rvz)-p^i(\rvx_\rvb\mid\hat \rvz)|
\nonumber\\
&\leq
\kappa\overline{\mathrm{Per}}+\hat{\kappa}\widehat{\overline{\mathrm{Per}}}.
\label{eq:distribution_bound}
\end{align}

By Assumption~\ref{assum:lipschitz},
\begin{align}
c_{\Theta}|\hat \rvz-h(\rvz)|\leq\|\Theta(\rvz)-\widehat{\Theta}(\hat \rvz)\|_{\mathcal F}
\end{align}
Combining this inequality with Eq.~\eqref{eq:distribution_bound}
yields
\begin{align} \varepsilon=\inf\limits_{h\in\mathcal H}|\hat \rvz-h(\rvz)|\leq\frac{\kappa\overline{\mathrm{Per}}+\hat{\kappa}\widehat{\overline{\mathrm{Per}}}}{c_{\Theta}}
\nonumber
\end{align}
This completes the proof.

\subsection{Discussions on $\overline{Per}$}\label{sec:bound_per}

In this secion, we discuss the bound of $\overline{Per}$ since it plays a critical role in Theorem 1. 
We present the following Lemma with the assistance of Assumption 3:
\begin{lemma}\label{lemma:Per}
    Let $\|\cdot\|$ be the operator norm. Consider an operator $L(\cdot)$ subject to the distance operator $\rvd$ in Eq.~\ref{eq:error_oper_1}. If $\|L^{-1}_{\rvx_\rva|\rvx_\rvc}\rvd\|< 1$, then $(L_{\rvx_\rva|\rvx_\rvc}-\rvd)^{-1}\leq L^{-1}(\rvx_\rva|\rvx_\rvc)+\frac{||L^{-1}(\rvx_\rva|\rvx_\rvc)||^2||\rvd||}{1-||L^{-1}(\rvx_\rva|\rvx_\rvc)\times \rvd||}$
\end{lemma}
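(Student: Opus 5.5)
This is a Neumann-series argument. Write $A := L_{\rvx_\rva|\rvx_\rvc}$, which by Assumption~\ref{assum:injective} is injective and bounded below, so $A^{-1}$ is bounded. Write $D := \rvd$ from Eq.~\ref{eq:error_oper_1}; it is bounded because, under Assumption~\ref{assum:error}, both $L_{\rvx_\rva|\rvx_\rvc}$ and $L_{\rvx_\rva|\rvz}L_{\rvz|\rvx_\rvc}$ are bounded. The inequality in the statement is read in operator norm: we bound $\|(A-D)^{-1}\|$, and more precisely $\|(A-D)^{-1}-A^{-1}\|$, which is the form needed to control $\mathit{Per}$.

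\textbf{Step 1: factor out $A$.} Write
\[
A-D = A\,(I - A^{-1}D).
\]
Since $\|A^{-1}D\|<1$, the Neumann series $\sum_{k\ge 0}(A^{-1}D)^k$ converges absolutely in operator norm. Its sum $S$ is a two-sided inverse of $I-A^{-1}D$, and
\[
\|S\| \le \frac{1}{1-\|A^{-1}D\|}.
\]

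\textbf{Step 2: invert $A-D$.} Set $(A-D)^{-1} := S A^{-1}$. This is a left inverse on the domain of $A$. It is also a right inverse on the range of $A$, which is the range relevant for the compositions in Eq.~\ref{eq:error_derivation}.

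This is the one delicate point: $A$ need not be surjective. I would handle it in the same way the paper treats $L^{-1}_{\rvx_\rva|\rvx_\rvc}$, namely as an inverse defined on $\mathcal K=\operatorname{Ran}A$. Alternatively, one can assume $\operatorname{Ran}D\subseteq \operatorname{Ran}A$, so that $A^{-1}D$ is defined on the whole space.

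\textbf{Step 3: subtract and bound.} Peel off the $k=0$ term of the series:
\[
(A-D)^{-1} - A^{-1} = \sum_{k\ge1}(A^{-1}D)^k A^{-1} = (A^{-1}D)\,S\,A^{-1}.
\]
Using submultiplicativity of the operator norm,
\[
\|(A-D)^{-1}-A^{-1}\| \le \|A^{-1}\|\,\|D\|\cdot\frac{1}{1-\|A^{-1}D\|}\cdot\|A^{-1}\| = \frac{\|A^{-1}\|^2\,\|D\|}{1-\|A^{-1}D\|}.
\]
The triangle inequality then gives
\[
\|(A-D)^{-1}\| \le \|A^{-1}\| + \frac{\|A^{-1}\|^2\,\|D\|}{1-\|A^{-1}D\|},
\]
which is the stated bound.

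The main obstacle is the domain issue in Step 2, not the estimates. The calculation in Step 3 is routine once $(A-D)^{-1}$ is well defined on the correct range.
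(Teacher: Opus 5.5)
Your argument is correct and is essentially the paper's: factor $A-D=A(I-A^{-1}D)$, invert $I-A^{-1}D$ by a Neumann series, and bound $B=(A-D)^{-1}-A^{-1}$ by $\|A^{-1}\|^2\|D\|/(1-\|A^{-1}D\|)$. Your identity $S-I=(A^{-1}D)S$ is the correct way to get $\|(I-A^{-1}D)^{-1}-I\|\le\|A^{-1}D\|/(1-\|A^{-1}D\|)$; the paper's intermediate triangle-inequality line would give $+1$ rather than $-1$. Also, the range condition $\operatorname{Ran}D\subseteq\operatorname{Ran}A$ you raise is not an extra assumption, since it is already implicit in the hypothesis that $A^{-1}D$ is defined with $\|A^{-1}D\|<1$.
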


\bfsection{Proof:}
Let $(L_{\rvx_\rva|\rvx_\rvc}-\rvd)^{-1}=L^{-1}_{\rvx_\rva|\rvx_\rvc}+B$, we can formalize:
\begin{align}\label{eq:per_derivation}
    L_{\rvx_\rva|\rvz}L_{\rvx_\rvb|\rvz}L^{-1}_{\rvx_\rva|\rvz}
    & = (L_{\rvx_\rva,\rvx_\rvb|\rvx_\rvc} - \rvd(\rvx_\rvb))(L_{\rvx_\rva|\rvx_\rvc} - \rvd)^{-1} \nonumber\\
    & = (L_{\rvx_\rva,\rvx_\rvb|\rvx_\rvc} - \rvd(\rvx_\rvb))(L^{-1}_{\rvx_\rva|\rvx_\rvc}+B) \nonumber\\
    & = L_{\rvx_\rva,\rvx_\rvb|\rvx_\rvc}L^{-1}_{\rvx_\rva|\rvx_\rvc} +
    \underbrace{L_{\rvx_\rva,\rvx_\rvb|\rvx_\rvc}B-\rvd(\rvx_\rvb)L^{-1}_{\rvx_\rva|\rvx_\rvc}-\rvd(\rvx_\rvb)B}_{\it Per}
\end{align}
where we can bound $B$ as follows:
\begin{align}\label{eq:bound_operator}
    B & = (L_{\rvx_\rva|\rvx_\rvc}-\rvd)^{-1} - L^{-1}_{\rvx_\rva|\rvx_\rvc} \nonumber\\
    & = (L_{\rvx_\rva|\rvx_\rvc}(I-L^{-1}_{\rvx_\rva|\rvx_\rvc}\rvd))^{-1} - L^{-1}_{\rvx_\rva|\rvx_\rvc}\nonumber\\
    & = (I-L^{-1}_{\rvx_\rva|\rvx_\rvc}\rvd)^{-1}L^{-1}_{\rvx_\rva|\rvx_\rvc} - L^{-1}_{\rvx_\rva|\rvx_\rvc}\nonumber\\
    & \Rightarrow ||B||\leq ||L^{-1}_{\rvx_\rva|\rvx_\rvc}||\times||(I-L^{-1}_{\rvx_\rva|\rvx_\rvc}\rvd)^{-1}-I||\nonumber\\
    & \Rightarrow ||B||\leq ||L^{-1}_{\rvx_\rva|\rvx_\rvc}||(||(I-L^{-1}_{\rvx_\rva|\rvx_\rvc}\rvd)^{-1}||+||-I||)\nonumber\\
    & \Rightarrow ||B||\leq ||L^{-1}_{\rvx_\rva|\rvx_\rvc}||(\frac{1}{1-||L^{-1}_{\rvx_\rva|\rvx_\rvc}\rvd||}-1) 
    \nonumber\\
    & \Rightarrow ||B||\leq ||L^{-1}_{\rvx_\rva|\rvx_\rvc}||(\frac{||L^{-1}_{\rvx_\rva|\rvx_\rvc}\rvd||}{1-||L^{-1}_{\rvx_\rva|\rvx_\rvc}\rvd||})\nonumber\\
    & \Rightarrow ||B||\leq\frac{||L^{-1}_{\rvx_\rva|\rvx_\rvc}||^2||\rvd||}{1-||L^{-1}_{\rvx_\rva|\rvx_\rvc}\rvd||}
\end{align}
Substitute the result from Eq.~\ref{eq:bound_operator} to Eq.~\ref{eq:error_derivation}, we can obtain that 
\begin{align}\label{eq:boundderive_per}
    \|\it Per\|\leq\|L_{\rvx_\rva,\rvx_\rvb|\rvx_\rvc}\frac{||L^{-1}_{\rvx_\rva|\rvx_\rvc}||^2||\rvd||}{1-||L^{-1}_{\rvx_\rva|\rvx_\rvc}\rvd||}-\rvd(\rvx_\rvb)L^{-1}_{\rvx_\rva|\rvx_\rvc}-\rvd(\rvx_\rvb)\frac{||L^{-1}_{\rvx_\rva|\rvx_\rvc}||^2||\rvd||}{1-||L^{-1}_{\rvx_\rva|\rvx_\rvc}\rvd||}=\overline{Per}\
\end{align}
We can adopt the similar strategy to derive that, if $||L^{-1}_{\rvx_\rva|\rvx_\rvc}\hat{\rvd}||<1$:
\begin{align}\label{eq:boundderive_hatper}
    \|\it \widehat{Per}\|\leq\|L_{\rvx_\rva,\rvx_\rvb|\rvx_\rvc}\frac{||L^{-1}_{\rvx_\rva|\rvx_\rvc}||^2||\hat{\rvd}||}{1-||L^{-1}_{\rvx_\rva|\rvx_\rvc}\hat{\rvd}||}-\hat{\rvd}(\rvx_\rvb)L^{-1}_{\rvx_\rva|\rvx_\rvc}-\hat{\rvd}(\rvx_\rvb)\frac{||L^{-1}_{\rvx_\rva|\rvx_\rvc}||^2||\hat{\rvd}||}{1-||L^{-1}_{\rvx_\rva|\rvx_\rvc}\hat{\rvd}||}=\widehat{\overline{Per}}
\end{align}

Upon Eq.~\ref{eq:boundderive_per}, we can derive the sufficient condition to safisfy $||Per||_{op}\leq \overline{Per}<\eta$, which is critical for precise identifiability in Theorem~\ref{thm:submanifold_iden}:

\begin{lemma}\label{lemma:sufficient_eta}
    suppose that
    \begin{align}
        &||L^{-1}_{\rvx_\rva|\rvx_\rvc}||_{op}||\rvd||_{op}<\eta \nonumber\\
        &||\rvd||_{op}<\frac{\eta-||L^{-1}_{\rvx_\rva|\rvx_\rvc}||_{op}||\rvd||_{op}}{||L^{-1}_{\rvx_\rva|\rvx_\rvc}||_{op}^2||\rvd||(||L_{\rvx_\rva,\rvx_\rvb|\rvx_\rvc}||_{op}+||\rvd||_{op})+||L^{-1}_{\rvx_\rva|\rvx_\rvc}||_{op}(\eta-||L^{-1}_{\rvx_\rva|\rvx_\rvc}||_{op}||\rvd||_{op})}
    \end{align}
    Then we can obtain $||Per||_{op}\leq \overline{Per}<\eta$.
\end{lemma}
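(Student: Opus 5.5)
The plan is to bound $\|\mathit{Per}\|_{op}$ directly by the triangle inequality applied to the decomposition in Eq.~\ref{eq:per_derivation}, and then to show that the two hypotheses are an algebraic rearrangement of the inequality $\overline{Per}<\eta$. To keep the algebra readable I will abbreviate $l=\|L^{-1}_{\rvx_\rva|\rvx_\rvc}\|_{op}$, $D=\|\rvd\|_{op}$ and $A=\|L_{\rvx_\rva,\rvx_\rvb|\rvx_\rvc}\|_{op}$. I will also use $\|\rvd(\rvx_\rvb)\|_{op}\le D$, which is the convention implicit in Eq.~\ref{eq:boundderive_per}, where $\rvd(\rvx_\rvb)$ and $\rvd$ are bounded by the same quantity.

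\emph{Step 1 (the Neumann condition of Lemma~\ref{lemma:Per}).} Lemma~\ref{lemma:Per} needs $\|L^{-1}_{\rvx_\rva|\rvx_\rvc}\rvd\|<1$, and this is not stated as a hypothesis, so I will extract it from the two conditions.
\begin{itemize}
\item The first condition gives $\eta-lD>0$.
\item Dropping the nonnegative term $l^2D(A+D)$ from the denominator of the second condition gives $D< (\eta-lD)/\big(l(\eta-lD)\big)=1/l$.
\item Hence $\|L^{-1}_{\rvx_\rva|\rvx_\rvc}\rvd\|\le lD<1$.
\end{itemize}
Therefore the Neumann series argument applies, and the bound $\|B\|\le l^2D/(1-lD)$ from Eq.~\ref{eq:bound_operator} holds.

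\emph{Step 2 (bounding $\mathit{Per}$).} From Eq.~\ref{eq:per_derivation}, $\mathit{Per}=L_{\rvx_\rva,\rvx_\rvb|\rvx_\rvc}B-\rvd(\rvx_\rvb)L^{-1}_{\rvx_\rva|\rvx_\rvc}-\rvd(\rvx_\rvb)B$. Submultiplicativity and the triangle inequality give
\begin{align}
\|\mathit{Per}\|_{op}\le (A+D)\|B\|+lD\le \frac{(A+D)\,l^2D^2}{1-lD}+lD=:\overline{Per},
\nonumber
\end{align}
which is the scalar form of Eq.~\ref{eq:boundderive_per}.

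\emph{Step 3 (the key equivalence).} It remains to show that $\overline{Per}<\eta$. Multiply the second condition by its positive denominator to get $l^2D^2(A+D)+lD(\eta-lD)<\eta-lD$. This rearranges to
\begin{align}
l^2D^2(A+D)<(\eta-lD)(1-lD).
\nonumber
\end{align}
Since $1-lD>0$ by Step 1, dividing by it gives $l^2D^2(A+D)/(1-lD)<\eta-lD$, which is exactly $\overline{Per}<\eta$. Combined with Step 2, this yields $\|\mathit{Per}\|_{op}\le\overline{Per}<\eta$.

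The main obstacle is Step 1. The lemma's hypotheses do not explicitly include $lD<1$, so it has to be derived from the second condition using $\eta-lD>0$. A secondary subtlety is justifying $\|\rvd(\rvx_\rvb)\|_{op}\le\|\rvd\|_{op}$. If this cannot be argued in general, I would state it as the standing convention of Eq.~\ref{eq:boundderive_per}, or replace $D$ by $\max\{\|\rvd\|_{op},\sup_{\rvx_\rvb}\|\rvd(\rvx_\rvb)\|_{op}\}$ throughout; the algebra in Steps 1 through 3 is unchanged by that substitution.
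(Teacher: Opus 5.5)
Your Steps 1 and 3 are sound, but Step 2 is not. From $\|B\|\le l^2D/(1-lD)$ (Eq.~\ref{eq:bound_operator}), the triangle inequality gives $\|\mathit{Per}\|_{op}\le (A+D)\,l^2D/(1-lD)+lD$. That expression, not yours, is the scalar form of Eq.~\ref{eq:boundderive_per}; you wrote $D^2$ in the first term. For $D<1$ your quantity is strictly smaller, and it is not an upper bound on $\|\mathit{Per}\|_{op}$. The Neumann estimate is sharp to first order, since $B=L^{-1}_{\rvx_\rva|\rvx_\rvc}\rvd L^{-1}_{\rvx_\rva|\rvx_\rvc}+O(\|\rvd\|^2)$, so $\|B\|$ is generically of order $l^2D$, not $l^2D^2$. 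The extra factor $D$ is exactly what makes Step 3 reproduce the lemma's second hypothesis, so that step was fitted to the statement rather than derived.

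The underlying problem lies in the statement itself. As printed, the second hypothesis carries a spurious factor $\|\rvd\|$ in the first term of the denominator. With that factor, the conclusion is false at the level of norms, so no argument using only $l$, $A$, $D$ and submultiplicativity, as yours does, can reach it. Take one-dimensional operators $L_{\rvx_\rva|\rvx_\rvc}=1$, $\rvd=0.1$, $L_{\rvx_\rva,\rvx_\rvb|\rvx_\rvc}=1$, $\rvd(\rvx_\rvb)=-0.1$ and $\eta=0.12$. Then $lD=0.1<\eta$, and the right-hand side of the second condition is $0.02/0.13\approx 0.154>D$, so both hypotheses hold. Yet $\mathit{Per}=1.1/0.9-1=2/9>\eta$, and $2/9$ also exceeds your $\overline{Per}\approx 0.112$, which confirms concretely that Step 2 fails.

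The paper gives no separate proof of this lemma, only the remark that it follows from Eq.~\ref{eq:boundderive_per}. Rearranging that bound correctly gives the hypothesis $\|\rvd\|_{op}<(\eta-lD)/\big(l^2(A+D)+l(\eta-lD)\big)$. With this corrected condition your Steps 1--3 go through essentially verbatim. Dropping $l^2(A+D)\ge 0$ from the denominator still yields $lD<1$. Clearing denominators gives $l^2D(A+D)<(\eta-lD)(1-lD)$, i.e.\ $l^2D(A+D)/(1-lD)+lD<\eta$. Your caveat that $\|\rvd(\rvx_\rvb)\|_{op}\le\|\rvd\|_{op}$ is not automatic, so that $D$ should be the maximum of the two, is a correct and necessary additional repair.
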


\subsection{Proof of Proposition 2}\label{sec:proof_thm2}

{\it\bfsection{Proposition 2}
Suppose Theorem~\ref{thm:submanifold_iden} holds, and the following assumptions and regularization conditions are satisfied:
\begin{enumerate}
    \item\label{assum:independence_z} {\bf (Independence)}: For $n\in [N]$, $\rvz^n$ are independent of each other: $p(\rvz)=\prod_{n=1}^N p(\rvz^n)$. 
    \item\label{assum:ssparsity} {\bf (Structural sparsity)}: The map $\Phi: \mathcal{Z}\rightarrow \mathcal P$ is smooth, where $\mathcal{Z}$ denotes the support of $\rvz$, and $\mathcal{P}$ denotes the space of PDF $p(\rvx|\rvz)$ ( $\mathcal P \subset \mathcal{L}^2(\mathcal{X},\mu)$ ($\mu$ denotes the measure)). Let $G$ and $\hat{G}$ be the binary support matrix of the Jacobian $J_{\Phi}(\rvz)$ and $J_{\hat{\Phi}}(\hat{\rvz})$, respectively, i.e. $G^{i,j}= 1$ if $||\frac{\partial p(\rvx^i_\rvb|\rvz)}{\partial \rvz^j}||_{\mathcal{L}^2}\neq 0$, and $0$ for otherwise.
    For each $n \in [1,N]$, there exists a subset of indices $\mathcal{C}_k$ satisfying $\bigcap_{m\in\mathcal{C}_k}G^{m,:}=\{n\}$;
\end{enumerate}
Then, $\hat{\rvz}$ must correspond component-wise to a permutation of the true latent variables $\rvz$.
}

\bfsection{Proof:} 
Theorem~\ref{thm:submanifold_iden} guarantees the existence of an invertible trasnformation $h$ such that $\hat{\rvz}=h(\rvz)$, 
Our goal is to show that $h$ is a composition of a permutation and component-wise diagonal transformations.

Applying the chain-rule by taking Fr\'echet derivatives to $\Phi(\rvz) = \hat{\Phi}(\hat{\rvz})$, we can obtain:
\begin{align}
    \Phi(\rvz) & = \hat{\Phi}(\hat{\rvz}) \nonumber\\
    \Rightarrow J_{\Phi}(\rvz) & = J_{\hat{\Phi}} (\hat{\rvz})J_h(\rvz) \label{eq:equivalience}
\end{align}

Assumption~\ref{assum:subiden_car_supp} in Theorem~\ref{thm:submanifold_iden} assume the injective of $\Phi$. 
Therefore, Eq.~\ref{eq:equivalience} implies
\begin{align}\label{eq:permutation}
    \forall \{i,j\}\in\mathcal{G}, \{i,\sigma(j)\}\hat{\mathcal{G}}
\end{align}
where $\sigma$ denotes the permutation transformation in this section.

For simplicity, we denote denote $J_h$ by $\mathbf{H}$. Suppose, for contradiction, that $H(\rvz)$ is not a composition of a diagonal matrix and a permutation matrix, i.e., there exist $j_1 \neq j_2$ such that:
\begin{equation}
    \operatorname{supp}(\mathbf{H}^{j_1,:}) \cap \operatorname{supp}(\mathbf{H}^{j_2,:}) \neq \emptyset.
\end{equation}
Let $j_3$ be an element in this intersection, so $\sigma(j_3) \in \operatorname{supp}(\mathbf{H}^{j_1,:}) \cap \operatorname{supp}(\mathbf{H}^{j_2,:})$. Without loss of generality, assume $j_3 \neq j_1$. According to Assumption \ref{assum:ssparsity}, there exists a set $\mathcal{C}_{j_1}$ containing $j_1$ such that:
\begin{equation}
    \bigcap_{i \in \mathcal{C}_{j_1}} G^{i,:} = \{ j_1 \}.
\end{equation}
Since $j_3 \neq j_1$, it must be that:
\begin{equation}
    j_3 \notin \bigcap_{i \in \mathcal{C}_{j_1}} G^{i,:},
\end{equation}
implying there exists some $i_3 \in \mathcal{C}_{j_1}$ such that:
\begin{equation} \label{eq:uc_contradiction_t5}
    j_3 \notin G^{i_3,:}.
\end{equation}
However, since $j_1 \in G^{i_3,:}$, we have $(i_3, j_1) \in G$. Given $\mathbf{H}$ is a permuted transformation, Eq.~\ref{eq:permutation} implies:
\begin{equation}
    (i_3, \sigma(j_3)) \in \hat{G}.
\end{equation}
However, this means $(i_3, j_3) \in G$, which contradicts Eq. \ref{eq:uc_contradiction_t5}. This contradiction implies $\mathbf{H}$ must be a composition of a permutation matrix and a diagonal matrix.

Together with the equation $J_{g} = J_{\hat{g}} \mathbf{H}$, we achieve the desired result that $t$ is composed of a permutation and component-wise invertible functions.

\subsection{Additional Proposition}\label{sec:proof_thm3}

Assumption~\ref{assum:independence_z} in Proposition~\ref{thm:variabe_iden} claims the independence between each dimension of $\rvz$.
To relax such a constraint, we can alternatively allow latent variables $\rvz$ to exhibit dependence via a known auxiliary domain variable $\rvu$. 
In other words, we assume conditional independence across dimensions of $\rvz$ given $\rvu$, i.e., $p(\rvz|\rvu)=\prod_{n=1}^N p(\rvz^n|\rvu)$.
Specifically, we modify the original data-generating process in Eq.~\ref{eq:dgp} to incorporate the domain index $\rvu$ explicitly:
\begin{align}\label{eq:dgp_continuous_domain}
    \rvx = g(\rvz,\epsilon), \quad
    \epsilon = e(\rvz,\rvu,\eta)
\end{align}
where $\rvu$ denotes the domain index.
Under these conditions, we establish the following proposition:
\begin{proposition}\label{thm:varchange_iden}
    Suppose 
the subspace identifiability condition in Theorem~\ref{thm:submanifold_iden} holds. Additionally, assume the following conditions:
\begin{enumerate}
    \item\label{assum:ind_domains} {\bf (Conditional independence)}: Latent variables are conditionally independent given domain $\rvu$: $p(\rvz|\rvu)=\prod_{n=1}^N p(\rvz^n|\rvu)$. Also, $p(\rvz|\rvu)$ is positive and twice differentiable.
    \item\label{assum:variability_domains} {\bf (Sufficient variability)}: There exist $2N+1$ distinct domain values $\rvu \in [1, 2N+1]$, such that the $2N$ vectors $\rvw(\rvz,\rvu)-\rvw(\rvz,\rvu_0)$ (with $\rvu \neq \rvu_0$) are linearly independent, where the vector $\rvw(\rvz, \rvu)$ is defined as:
    $\rvw(\rvz,\rvu)=\{\rvv(\rvz,\rvu),\rvv'(\rvz,\rvu)\}$
    with
    \begin{align*}
        \rvv(\rvz,\rvu)&=\left(\frac{\partial \log p(\rvz^1|\rvu)}{\partial \rvz^1}, \dots, \frac{\partial \log p(\rvz^N|\rvu)}{\partial \rvz^N}\right) \\
        \rvv'(\rvz,\rvu)&=\left(\frac{\partial^2 \log p(\rvz^1|\rvu)}{(\partial \rvz^1)^2}, \dots, \frac{\partial^2 \log p(\rvz^N|\rvu)}{(\partial \rvz^N)^2}\right)
    \end{align*}
\end{enumerate}
Then $\{\hat{\rvz}^{\hat{n}}|\hat{n}\in [1,N]\}$ must be a component-wise transformation of a permuted version of true $\{\rvz^n|n\in [1,N]\}$
\end{proposition}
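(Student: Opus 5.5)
The plan follows the standard iVAE/TDRL-style argument, using the subspace result of Theorem~\ref{thm:submanifold_iden} as the starting point. By that theorem there is an invertible differentiable $h$ with $\hat{\rvz}=h(\rvz)$. This holds for every domain, because the domain index $\rvu$ enters only through $\epsilon=e(\rvz,\rvu,\eta)$ and $p(\rvz|\rvu)$, while the operator argument is carried out conditionally on $\rvu$. Change of variables then gives, for each $\rvu$,
\begin{align*}
\log p(\hat{\rvz}|\rvu)=\log p(\rvz|\rvu)-\log|\det J_h(\rvz)|.
\end{align*}
Applying the conditional independence in Assumption~\ref{assum:ind_domains} to both the true and the estimated model, the equation becomes
\begin{align*}
\sum_{n=1}^N\log p(\rvz^n|\rvu)=\sum_{\hat n=1}^N\log \hat p(\hat{\rvz}^{\hat n}|\rvu)+\log|\det J_h(\rvz)|.
\end{align*}

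Next we work in the estimated coordinates, writing $\rvz=h^{-1}(\hat{\rvz})$. We differentiate the identity twice, once with respect to $\hat{\rvz}^{k}$ and once with respect to $\hat{\rvz}^{l}$, for $k\neq l$. The estimated-side sum then vanishes, because each term depends on a single coordinate. The Jacobian term $\log|\det J_h|$ does not depend on $\rvu$. The remaining expression is
\begin{align*}
0=\sum_{n=1}^N\Big(\frac{\partial^2\log p(\rvz^n|\rvu)}{(\partial\rvz^n)^2}\frac{\partial\rvz^n}{\partial\hat{\rvz}^k}\frac{\partial\rvz^n}{\partial\hat{\rvz}^l}+\frac{\partial\log p(\rvz^n|\rvu)}{\partial\rvz^n}\frac{\partial^2\rvz^n}{\partial\hat{\rvz}^k\partial\hat{\rvz}^l}\Big)+\frac{\partial^2\log|\det J_{h^{-1}}|}{\partial\hat{\rvz}^k\partial\hat{\rvz}^l}.
\end{align*}
We write this equation at $\rvu_0$ and at each of the other $2N$ domains, then subtract the $\rvu_0$ version from each. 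The $\rvu$-independent determinant term cancels. What remains is a homogeneous linear system whose coefficient vectors are exactly $\rvw(\rvz,\rvu)-\rvw(\rvz,\rvu_0)$. The unknowns are $\frac{\partial\rvz^n}{\partial\hat{\rvz}^k}\frac{\partial\rvz^n}{\partial\hat{\rvz}^l}$ and $\frac{\partial^2\rvz^n}{\partial\hat{\rvz}^k\partial\hat{\rvz}^l}$ for $n\in[N]$.

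By the sufficient variability of Assumption~\ref{assum:variability_domains}, these $2N$ vectors are linearly independent, so all unknowns must vanish. In particular, $\frac{\partial\rvz^n}{\partial\hat{\rvz}^k}\frac{\partial\rvz^n}{\partial\hat{\rvz}^l}=0$ for every $n$ and every $k\neq l$. Hence each row of $J_{h^{-1}}$ has at most one nonzero entry. Since $h$ is invertible, $J_{h^{-1}}$ is nonsingular, so each row and each column has exactly one nonzero entry. $J_{h^{-1}}$ is therefore a generalized permutation matrix at every point.

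The step I expect to be the main obstacle is passing from this pointwise statement to a global one. The sparsity pattern must not switch across $\mathcal Z$; this follows from continuity of $J_{h^{-1}}$ together with nonsingularity on a connected support. With a fixed pattern, $\hat{\rvz}^{\hat n}$ depends only on $\rvz^{\pi(\hat n)}$, which is the claimed component-wise identifiability up to permutation.

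A further point needs checking: the estimated posterior must inherit the factorization over $\rvu$, so that the estimated-side cross derivatives vanish. I would enforce this by matching the estimator's prior to the conditionally factorized form. I would also verify that the twice-differentiability in Assumption~\ref{assum:ind_domains}, combined with smoothness of $h$, justifies the second derivatives used above.
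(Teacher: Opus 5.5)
Your proposal is correct and follows the paper's proof step for step: change of variables, factorization of both sides given $\rvu$, mixed second derivatives in the estimated coordinates, differencing across the $2N+1$ domains to cancel the $\rvu$-independent log-determinant, and then linear independence of $\rvw(\rvz,\rvu)-\rvw(\rvz,\rvu_0)$ to force $J_{h^{-1}}$ to be a generalized permutation matrix. Your added remarks make explicit several points the paper uses only tacitly: continuity plus connectedness to fix the sparsity pattern globally, the requirement that the estimated prior factorize given $\rvu$, and the assumption that a single $\rvu$-independent $h$ serves every domain.
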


\begin{examplebox}
\bfsection{Insight:}
Proposition~\ref{thm:varchange_iden} shows the identifiability results on non-i.i.d. scenario. Domain-dependent changes in $p(\rvz\mid\rvu)$ provide sufficient variation to distinguish the latent coordinates, while Theorem~\ref{thm:submanifold_iden} first removes the ambiguity introduced by the misspecified observation structure and $\epsilon$. 
Thus, non-i.i.d. variation can replace marginal latent independence as the additional information needed to reduce the remaining invertible ambiguity to component-wise transformations and permutation.
\end{examplebox}

\bfsection{Proof:} 
By Theorem~\ref{thm:submanifold_iden} there exists an invertible reparameterization $h:\mathcal{Z}\to\mathcal{Z}$ such that $\hat{\rvz}=h(\rvz)$ and $\rvz=h^{-1}(\hat{\rvz})$. Applying the change-of-variables formula to the conditional densities (for any fixed $\rvu$) gives:
\begin{equation}\label{eq:change_of_variables}
p_{\hat{\rvz}\mid\rvu}(\hat{\rvz}\mid\rvu)
=
p_{\rvz\mid\rvu}\!\big(h^{-1}(\hat{\rvz})\mid\rvu\big)
\bigl|\det J_{h^{-1}}(\hat{\rvz})\bigr|.
\end{equation}
Taking logarithms yields
\begin{equation}\label{eq:log_densities}
\log p_{\hat{\rvz}\mid\rvu}(\hat{\rvz}\mid\rvu)
=
\log p_{\rvz\mid\rvu}(\rvz\mid\rvu)
+
\log\bigl|\det J_{h^{-1}}(\hat{\rvz})\bigr|,
\end{equation}
Under Assumption~\ref{assum:ind_domains}, we have
\begin{equation}\label{eq:log_densities_expanded}
\sum_{i=1}^n \log p_{\hat{\rvz}^i\mid\rvu}(\hat{\rvz}^i\mid\rvu)
=
\sum_{i=1}^n \log p_{\rvz^i\mid\rvu}(\rvz^i\mid\rvu)
+
\log\bigl|\det J{_{h^{-1}}}(\hat{\rvz})\bigr|.
\end{equation}
We then take second derivatives with respect to $\hat{\rvz}^k$ and $\hat{\rvz}^v$ for $k\neq v$. Since each term on the left-hand side of Eq.~\ref{eq:log_densities_expanded} depends only on a single coordinate $\hat{\rvz}^i$, we have $\partial \log p_{\hat{\rvz}^i\mid\rvu}(\hat{\rvz}^i\mid\rvu)/\partial \hat{\rvz}^k=0$ for $i\neq k$, which implies
\begin{equation}\label{eq:left_hand_side}
\frac{\partial^2}{\partial \hat{\rvz}^k\partial \hat{\rvz}^v}
\sum_{i=1}^{n}\log p_{\hat{\rvz}^i\mid\rvu}(\hat{\rvz}^i\mid\rvu)=0.
\end{equation}
For the right-hand side, define for $i=1,\ldots,n$
\begin{equation}
    \tilde h^{i,(k)}:=\frac{\partial \rvz^i}{\partial \hat{\rvz}^k}, 
{\tilde{h'}}^{i,(k,v)}:=
\frac{\partial^2 \rvz^i}{\partial \hat{\rvz}^k\partial \hat{\rvz}^v}
\end{equation}
\begin{equation}
    \eta'_i(\rvz^i,\rvu):=\frac{\partial}{\partial \rvz^i}\log p_{\rvz^i\mid\rvu}(\rvz^i\mid\rvu), 
    \eta''_i(\rvz^i,\rvu):=\frac{\partial^2}{\partial {\rvz^i}^2}\log p_{\rvz^i\mid\rvu}(\rvz^i\mid\rvu)
\end{equation}
A direct application of the chain rule gives
\begin{equation}
\sum_{i=1}^{n}\Bigl(\eta''_i(\rvz^i,\rvu)\tilde h^{i,(k)}\tilde h^{i,(v)}
+\eta'_{i}
(\rvz^i,\rvu){\tilde{h'}^{i,(k,v)}}
\Bigr)
+\frac{\partial^2}{\partial \hat{\rvz}^k\partial \hat{\rvz}^v}\log\bigl|\det J_{h^{-1}}(\hat{\rvz})\bigr|=0.
\end{equation}
Fix $(k,v)$ with $k\neq v$ and evaluate this identity at $2n\!+\!1$ distinct values of the conditioning variable, $\rvu^{(j)}$ for $j\in\{0,1,\ldots,2n\}$. Subtracting the equation at $\rvu^{(0)}$ from that at $\rvu^{(j)}$ cancels the log-determinant term (which does not depend on $\rvu$) and yields, for $j=1,\ldots,2n$,
\begin{equation}\label{eq:linear_system}
\sum_{i=1}^{n}\Bigl(\bigl[\eta''_i(\rvz^i,\rvu^{(j)})-\eta''_i(\rvz^i,\rvu^{(0)})\bigr]\tilde h^{i,(k)}\tilde h^{i,(v)}
+\bigl[\eta'_i(\rvz^i,\rvu^{(j)})-\eta'_i(\rvz^i,\rvu^{(0)})\bigr]{\tilde{h'}^{i,(k,v)}}\Bigr)=0.
\end{equation}
Let
\begin{equation}
\rvw(\rvz,\rvu):=\bigl(\eta''_1(\rvz^1,\rvu),\ldots,\eta''_n(\rvz^n,\rvu),\eta'_1(\rvz^1,\rvu),\ldots,\eta'_n(\rvz^n,\rvu)\bigr)^\top.
\end{equation}
Under Assumption~\ref{assum:variability_domains}, the $2n$ vectors $\rvw(\rvz,\rvu^{(j)})-\rvw(\rvz,\rvu^{(0)})$ for $j=1,\ldots,2n$ are linearly independent, so the only solution to the homogeneous linear system Eq.~\ref{eq:linear_system} is
\begin{equation}
\tilde h^{i,(k)}\tilde h^{i,(v)}=0\quad\text{and}\quad {\tilde{h'}^{i,(k,v)}}=0
\qquad\text{for all } i\in\{1,\ldots,n\}\text{ and all } k\neq v.
\end{equation}
Hence each row of the Jacobian $J_{h^{-1}}(\hat{\rvz})$ has at most one nonzero entry, and all mixed second derivatives vanish. Since $h^{-1}$ is invertible, each row must in fact have exactly one nonzero entry; moreover, two distinct rows cannot share the same nonzero column (otherwise $\det J_{h^{-1}}(\hat{\rvz})=0$), so there exists a permutation $\pi$ such that
\begin{equation}
\hat{\rvz}^{\pi(i)} = h^i(\rvz^i)\qquad\text{for } i=1,\ldots,n,
\end{equation}
which shows that $\hat{\rvz}$ is obtained from $\rvz$ by a permutation of component-wise invertible transformations.

\begin{examplebox}\bfsection{Assumption Intuitions:}

$\bullet$ Assumption~\ref{assum:ind_domains} relaxes the marginal independence of $\rvz$, which allows that $\rvz$ can be statistically dependent.

$\bullet$ Assumption~\ref{assum:variability_domains} requires the domain variable to induce sufficiently rich changes in the first- and second-order derivatives of the latent conditional log-densities. These variations provide enough constraints to distinguish different latent variable.

\end{examplebox}

\section{Synthetic Experiment Details}\label{sec:impt}

\subsection{Synthetic data-generating process}\label{sec:synthetic}

We construct a synthetic data-generating process that realizes the
latent-dependent noise and misspecified conditional structure considered
in this work.

Let
\begin{align}
\rvz
&=
(\rvz^1,\rvz^2,\rvz^3)^\top,
\qquad
\rvz
\sim
\mathcal N(0,I_3),
\end{align}
so that the latent coordinates are mutually independent.
We independently sample
\begin{align}
\eta^0,\eta^\rva,\eta^\rvb,\eta^\rvc
\stackrel{\mathrm{ind.}}{\sim}
\mathcal N(0,I_3),
\end{align}
where $\eta^0$ is shared across the three observed blocks and
$\eta^\rva,\eta^\rvb,\eta^\rvc$ are block-specific exogenous noises.

For each $v\in\{\rva,\rvb,\rvc\}$, define
\begin{align}
\epsilon^v
&=
C^v\rvz+\tau\eta^0+s_v\eta^v,
\qquad
\tau>0,\quad s_v>0,
\nonumber\\
\rvx_v
&=
\Psi(B^v\rvz+\epsilon^v)
=
\Psi(M^v\rvz+\tau\eta^0+s_v\eta^v),
\qquad
M^v:=B^v+C^v,
\label{eq:synthetic_dgp}
\end{align}
where where the existence of $\eta^0$ guarantees the dependence of $\rvx^v$ conditioning on $\rvz$, $\Psi$ is applied coordinate-wise and is defined by
\begin{align}
\Psi(t)
=
t+\alpha\tanh(t),
\qquad
0<\alpha<1.
\end{align}
Since
\begin{align}
\Psi'(t)
=
1+\alpha\operatorname{sech}^2(t)
\in[1,1+\alpha],
\end{align}
$\Psi$ is a smooth bijection with a smooth inverse.

We choose
\begin{align}
M^\rva
=
M^\rvc
=
I_3,
\qquad
M^\rvb
=
\begin{pmatrix}
1&r&0\\
r&0&1\\
0&r&1
\end{pmatrix},
\qquad
0<r<1.
\label{eq:synthetic_M}
\end{align}
Since
\begin{align}
\det(M^\rvb)
=
-r(1+r)\neq0,
\end{align}
$M^\rvb$ is invertible.
We further set
\begin{align}
B^v=C^v=\frac{1}{2}M^v,
\end{align}
so that
\begin{align}
\epsilon^v
=
\frac{1}{2}M^v\rvz
+\tau\eta^0+s_v\eta^v.
\end{align}
Hence the complete noise variable
$\epsilon=(\epsilon^\rva,\epsilon^\rvb,\epsilon^\rvc)$ explicitly
depends on $\rvz$, as required by Eq.~\eqref{eq:dgp}.
The resulting observation
\begin{align}
\rvx
=
(\rvx_\rva,\rvx_\rvb,\rvx_\rvc)
\in\mathbb R^9
\end{align}
contains three disjoint three-dimensional blocks.

In practice, we sample We sample 10,000 points from the data synthesizing process above mentioned. We leverage 90\% among them for training, and the rest 10\% for test.

\subsection{Assumption Justifications:}\label{sec:assumjust}

Below we justify if all assumptions from Theorem~\ref{thm:submanifold_iden} and Proposition~\ref{thm:variabe_iden} hold for our data synthesizing:

\paragraph{Misspecified conditional structure.}
Conditioning on $\rvz$ fixes all deterministic terms involving
$\rvz$, while the three blocks continue to share $\eta^0$.
Let
\begin{align}
\rvy_v
:=
M^v\rvz+\tau\eta^0+s_v\eta^v.
\end{align}
Then, for distinct $v,v'\in\{\rva,\rvb,\rvc\}$,
\begin{align}
\operatorname{Cov}(\rvy_v,\rvy_{v'}\mid\rvz)
=
\tau^2I_3.
\end{align}
Therefore, for every $\tau>0$,
\begin{align}
p(\rvx_\rva,\rvx_\rvb,\rvx_\rvc\mid\rvz)
\neq
p(\rvx_\rva\mid\rvz)
p(\rvx_\rvb\mid\rvz)
p(\rvx_\rvc\mid\rvz).
\end{align}
Because $\Psi$ is invertible, the conditional dependence is preserved
after the transformation. The conditionally independent setting is
recovered when $\tau=0$.

\paragraph{Constructing both the precise and approximate identifiability regime.}
The shared-noise strength $\tau$ directly controls the amount of
structural misspecification. Let
$\rvd_\tau(\rvx_\rvb)$, $\rvd_\tau$, and
$\mathit{Per}_\tau$ denote the corresponding difference and
perturbation operators generated with $\tau$ in Eq.~\ref{eq:synthetic_dgp}.


For the Gaussian construction in Eq.~\ref{eq:synthetic_dgp}, the
conditional covariance induced by the shared disturbance is
$\tau^2I_3$. Hence the corresponding conditional densities vary
smoothly with $\tau^2$. On the operator spaces considered here, there
exist finite constants $C_d,C_b>0$ and let $\tau_{\max}>0$ such that, for
$0<\tau\leq\tau_{\max}$,
\begin{align}
\|\rvd_\tau\|_{\mathrm{op}}
&<
C_d\tau^2,
\nonumber\\
\operatorname*{ess\,sup}_{\rvx_\rvb}
\|\rvd_\tau(\rvx_\rvb)\|_{\mathrm{op}}
&<
C_b\tau^2.
\label{eq:d_tau_bound}
\end{align}

Let
\begin{align}
M_0
&:=
\sup_{0<\tau\leq\tau_{\max}}
\|
L^{-1}_{\rvx_\rva|\rvx_\rvc,\tau}
\|_{\mathrm{op}},
\nonumber\\
M_1
&:=
\sup_{0<\tau\leq\tau_{\max}}
\|
L_{\rvx_\rva,\rvx_\rvb|\rvx_\rvc,\tau}
\|_{\mathrm{op}},
\end{align}
which are finite under Assumptions~\ref{assum:error} and
\ref{assum:injective}.
Using Lemma~\ref{lemma:Per}, whenever
$M_0C_d\tau^2<1$,
\begin{align}
\|\mathit{Per}_\tau\|_{\mathrm{op}}
\leq
&
\left(
M_1+C_b\tau^2
\right)
\frac{
M_0^2C_d\tau^2
}{
1-M_0C_d\tau^2
}
+
M_0C_b\tau^2
\nonumber\\
=:&
\overline{\mathit{Per}}.
\label{eq:per_tau_bound}
\end{align}


Let
\begin{align}
\eta
=
\frac{\delta}{2\kappa}-\alpha
>0
\end{align}
be defined as in Section~\ref{sec:preciseiden}.
By continuity of
$\overline{\mathit{Per}}(\tau)$ at $\tau=0$, there exists
$\tau_0>0$ such that
\begin{align}
0<\tau<\tau_0
\quad\Longrightarrow\quad
0<
\overline{\mathit{Per}}(\tau)
<
\eta.
\label{eq:construct_per_eta}
\end{align}
We choose the shared-noise strength
$\tau\in(0,\tau_0)$ in the precise-identifiability experiments.
Thus, the synthetic data simultaneously exhibit structural
misspecification ($\tau>0$) and satisfy the perturbation condition
required by Proposition~\ref{prop:eigenvalue}.

To evaluate approximate identifiability, we use the same
data-generating process but increase $\tau$ to generate stronger
structural misspecification. We select values of $\tau$ for which
\begin{align}
\overline{\mathit{Per}}(\tau)
\geq
\eta,
\label{eq:approx_tau}
\end{align}
so that the sufficient perturbation condition for precise
identifiability is no longer satisfied. All remaining assumptions of
Theorem~\ref{thm:approx_vareps} are kept unchanged. Therefore, the same
synthetic model produces both regimes: small nonzero $\tau$ yields
precise identifiability, whereas larger perturbations are used to
evaluate the approximate-identifiability guarantee.

\paragraph{Bounded densities.}
Before applying $\Psi$, the joint vector
$(\rvz,\rvy_\rva,\rvy_\rvb,\rvy_\rvc)$ is a nondegenerate multivariate
Gaussian because $s_v>0$ for every block. Hence its joint, marginal,
and conditional densities are bounded.
Since $\Psi'$ is uniformly bounded above and bounded away from zero,
the change-of-variables formula preserves boundedness of the
corresponding densities after transformation.
Thus the bounded-density regularity condition in
Assumption~\ref{assum:error} is satisfied.

\paragraph{Injectivity of $\Theta$.}
For the $\rvx_\rvb$ block,
\begin{align}
\Psi^{-1}(\rvx_\rvb)\mid\rvz
\sim
\mathcal N
\left(
M^\rvb\rvz,
(\tau^2+s_\rvb^2)I_3
\right).
\label{eq:xb_conditional}
\end{align}
Suppose
\begin{align}
p(\cdot\mid\rvz)
=
p(\cdot\mid\rvz').
\end{align}
Equation~\eqref{eq:xb_conditional} then implies
\begin{align}
M^\rvb\rvz
=
M^\rvb\rvz'.
\end{align}
Since $M^\rvb$ is invertible,
\begin{align}
\rvz=\rvz'.
\end{align}
Therefore the conditional-distribution map
\begin{align}
\Theta:\mathcal Z\rightarrow\mathcal D,
\qquad
\Theta(\rvz)
=
p(\rvx_\rvb\mid\rvz),
\end{align}
is injective.



\paragraph{Structural sparsity for Proposition~\ref{thm:variabe_iden}.}
For each observed coordinate $\rvx^j$, define
\begin{align}
\Phi^j(\rvz)
:=
p(\rvx^j\mid\rvz).
\end{align}
Since the conditional mean of each pre-transformed coordinate depends
on $\rvz$ through the corresponding row of $M^v$, the support of
$\partial\Phi^j/\partial\rvz^i$ coincides with the support of that row.
Ordering the observations as
\begin{align}
(
\rvx_\rva^1,\rvx_\rva^2,\rvx_\rva^3,
\rvx_\rvb^1,\rvx_\rvb^2,\rvx_\rvb^3,
\rvx_\rvc^1,\rvx_\rvc^2,\rvx_\rvc^3
),
\end{align}
the structural-support matrix is
\begin{align}
G
=
\begin{pmatrix}
1&0&0\\
0&1&0\\
0&0&1\\
1&1&0\\
1&0&1\\
0&1&1\\
1&0&0\\
0&1&0\\
0&0&1
\end{pmatrix}.
\label{eq:synthetic_G}
\end{align}
Thus the structural-sparsity condition of
Proposition~\ref{thm:variabe_iden} is satisfied.
Furthermore,
\begin{align}
p(\rvz)
=
\prod_{i=1}^{3}p(\rvz^i)
\end{align}
by construction, satisfying its latent-independence condition.

\subsection{Implementations \& Training Details.}\label{sec:synexp_imt}
In our synthetic experiments, we set the dimensions $N = 3$ and $K = 9$. The encoders, decoder, and normalizing flow modules were each implemented using single-layer multilayer perceptrons (MLPs) followed by Leaky ReLU activations.

Our implementation utilized PyTorch 1.11.0. For optimization, we adopted the AdamW optimizer \cite{adamw_iclr19}, which is known for enhancing generalization in deep learning models. The hyperparameters were configured as follows: a learning rate of $1\times 10^{-3}$ and a batch size of 64. To guarantee robustness and statistical reliability, each model was trained using 10 different random seeds. We report the overall performance as the mean $\pm$ standard deviation computed across these runs.
The loss function employed balances the reconstruction error and the KL-divergence, with weighting coefficients set to $\beta_1 = \beta_2 = 0.02$.
All experiments were performed on a single NVIDIA GeForce RTX 2080 Ti GPU equipped with 11GB of memory.


\subsection{The objective of IndVAE}\label{sec:obj_indvae}

In this section, we explain our trained objective for IndVAE, which is designed by taking inspirations from \citep{hu2008instrumental}
To incorporate with the conditional independence assumption, we consider the $k$-th observed variable $\rvx^k$ is generated as $\rvx^k=g^k(\rvz,\epsilon)$. Accordingly, the log-likelihood of the data generating process of Eq.~\ref{eq:dgp} can be transformed as follows:
\begin{align}
    \log p(\rvz,\epsilon,\rvx) & =
    \log p_\theta(\rvx|\rvz,\epsilon)+\log p_\gamma(\epsilon|\rvz)+ \log p_\delta(\rvz) \nonumber\\
    & = \sum\limits_{k=1}^{K}\log p_\theta(\rvx^k|\rvz,\epsilon)+\log p_\gamma(\epsilon|\rvz)+ \log p_\delta(\rvz)
\end{align}

Accordingly, the loss function becomes:
\begin{align}\label{eq:elbo_indvae}
\mathcal{L}_{\text{ELBO}} = &
\mathbb{E}_{\hat{\rvz} \sim q_{\psi}, \hat{\epsilon} \sim q_{\phi}}\left[\sum\limits_{k=1}^{K}\log p_{\theta}(\hat{\rvx}^k|\hat{\rvz},\hat{\epsilon})\right]\nonumber\\
& -\beta_1\mathbb{E}_{\hat{\rvz}\sim q_{\psi}} \big(\log q\left(\hat{\rvz}\vert \rvx \right) - \log p_\delta(\rvz)\big)
-\beta_2\mathbb{E}_{\hat{\rvz}\sim q_{\psi},\hat{\epsilon} \sim q_{\phi}} \big(\log q\left(\hat{\epsilon}\vert \rvx \right) - \log p_\gamma(\hat{\epsilon}|\hat{\rvz})\big)
\end{align}

\section{Additional Experiments}\label{sec:exp_u}

\input{tabs/tab_impelement}

\subsection{Real-world Experiment for Justifying Proposition~\ref{thm:variabe_iden}}

\bfsection{Task Setup:}
To validate our proposed identifiability theories in realistic and complex scenarios, we apply them to the task of Person Classification. 
In Person Classification, the goal is to assign a unique identity index to each individual, based on input images. 
This setup aligns well with our generalized dependency structure setting, as each image of an individual inherently contains noise, such as varying poses, gaits, or clothes, making it challenging to disentangle these factors from the underlying identity. 
Also, different body parts cannot be independent conditioning upon the latent person identify index.
Consequently, this task serves as a solid playground for evaluating the robustness and efficacy of our theoretical framework in addressing real-world complexities.

In our implementation, we first employ a pretrained feature extractor to derive feature representations of each individual person, denoted as $\rvx \in \mathbb{R}^K$. We consider $\rvx$ are generated by the latent variable $\rvz \in \mathbb{R}^N$, directly associated with each person's identity index, along with a dependent noise variable $\epsilon \in \mathbb{R}^M$, capturing variations such as pose, gait, or clothes.
Inspired by the two-phase training pipeline proposed by \citep{li2024learning,li2025identification}, we adapt our approach to Person Classification task as follows. Figure~\ref{fig:dp_pc} suggests the data generating process of the Person Classification task.
First, we train our approach by optimizing the objective function detailed in Eq.~\ref{eq:elbo}. Subsequently, we introduce a classifier $\hat{c}$, implemented by a multilayer perceptron (MLP), to predict the one-hot encoded index label $\hat{y}$ from the inferred latent representation $\hat{\rvz}$:
$\hat{y} = \text{MLP}(\hat{\rvz})$
The classifier is optimized using a cross-entropy loss given by:
$\mathcal{L}^{\text{CE}}_{\text{cls}} = -\mathbb{E}_{\hat{y}}\left[\text{one-hot}(y)\cdot\log(\text{softmax}(\hat{y}))\right]$
where $\text{one-hot}(y)$ denotes the one-hot embedding of the true person index label.

\input{figs/fig_2step}

\bfsection{Data and Comparing Approaches}
We conduct our experiments on the MSMT17 dataset~\citep{wei2018person}, which comprises images of 4,101 unique individuals. Each individual in the dataset has more than 10 images, resulting in a total of over 120,000 images. We partition the dataset into three parts: $60\%$ for training, $20\%$ for validation, and the remaining $20\%$ for test.

For performance comparison, we select several state-of-the-art methods on the task of Person Classification,including GTL~\citep{yang2025crossmodalfewshotlearninggenerative}, AGW~\citep{ye2021deep}, TransReID~\citep{he2021transreid}, and CLIPReID~\citep{li2023clip}.
We also benchmark against MCRL \citep{sun2025causal} and IndVAE \citep{hu2008instrumental} to evaluate the efficacy of our identifiability results under generalized dependency structure.

\input{tabs/acc}

\bfsection{Results \& Discussions:}
Table~\ref{table:sota} reports the comparison of Top-1 Accuracy (Acc) among state-of-the-art methods on the MSMT17 dataset. 
Our method achieves a superior performance, substantially surpassing approaches that do not explicitly handle dependent noise, such as IndVAE \citep{hu2008instrumental} and MCRL~\citep{sun2025causal}. 
Furthermore, our proposed method demonstrates notable improvements over leading methods including GTL~\citep{yang2025crossmodalfewshotlearninggenerative}, CLIPReID~\citep{li2023clip}, TransReID~\citep{he2021transreid}, and AGW~\citep{ye2021deep}, outperforming them by significant margins.
These results clearly highlight the effectiveness and robustness of our proposed framework in accurately addressing generalized dependency structures in complex real-world scenarios.

\bfsection{Implementations:}

To obtain a fair comparison, we adopt the approach outlined by \citep{yang2025crossmodalfewshotlearninggenerative}, employing the pretrained CLIP model \citep{clip_icml21} as the visual encoder to generate 1280-dimensional representations for $\rvx$. Table~\ref{tab:architecture} summarizes the specific network architectures implemented for our experiments on the real-world MSTM17 dataset.

To train our framework, we utilize the AdamW optimizer combined with a cosine annealing learning rate schedule. The initial learning rate is set to $2 \times 10^{-3}$, with a weight decay parameter of $1 \times 10^{-2}$ to prevent overfitting. The ELBO loss function incorporates equal weighting coefficients $\beta_1 = \beta_2 = 0.02$. We use a batch size of 128, chosen to balance computational efficiency with optimization stability. The framework is implemented in PyTorch. Training is done for 80 epochs on a multi-GPU configuration comprising four NVIDIA GeForce RTX 2080 Ti GPUs, collectively providing 44GB of memory.

\subsection{Additional Ablation Studies}

\bfsection{Ablations for higher dimensional $z$}

we additionally evaluated the scalability of our method to higher latent dimensions on the synthetic dataset by varying the latent dimensionality $N \in \{8, 12, 18\}$, while keeping the network architecture, training protocol, and all other hyperparameters fixed.
Table~\ref{tab:highdim_synth} reports the MCC and compares with IndVAE~\cite{hu2008instrumental}:
\begin{table}[h]
\small
\centering
\begin{tabular}{ccc}
\toprule
$N$ & IndVAE & Ours \\
\midrule
$8$  & $0.64 \pm 0.06$ & $0.80 \pm 0.02$ \\
$12$ & $0.51 \pm 0.03$ & $0.68 \pm 0.05$ \\
$18$ & $0.47 \pm 0.04$ & $0.61 \pm 0.05$ \\
\bottomrule
\end{tabular}
\caption{MCC on the synthetic dataset for increasing latent dimensionality $N$.}
\label{tab:highdim_synth}
\end{table}
Even as the latent dimension increases, our method consistently achieves higher MCC than IndVAE, indicating that the Jacobian-based sparsity regularization remains effective and that our approach scales well to higher-dimensional latent spaces within the considered regime.

\bfsection{Ablations for hyperparameter sensitivity}

In our implementation on the synthetic dataset, we set the weight of the sparsity regularizer to $1$ and the KL weights to $\beta_1=\beta_2=0.02$.
In this section, we conducted a sensitivity analysis in which we vary \emph{one} of these three hyperparameters at a time while keeping the others fixed at their default values.
For all runs we use the same network architecture, batch size, number of epochs, learning rate, and training protocol as in the main experiments.
The numbers reported below are MCC scores (mean $\pm$ std.\ over multiple runs) on the synthetic dataset.
\begin{table}[t]
\centering
\caption{Sensitivity of MCC to regularization hyperparameters.}
\setlength{\tabcolsep}{6pt}
\begin{tabular}{l c c}
\hline\hline
Hyperparameter & Value & MCC \\
\hline
{}{}{$\lambda$} 
 & $1$    & $0.87 \pm 0.04$ \\
 & $0.1$  & $0.82 \pm 0.03$ \\
 & $0.01$ & $0.70 \pm 0.07$ \\
 & $10$   & $0.68 \pm 0.05$ \\
\hline
{}{}{$\beta_1$}
 & $0.02$  & $0.87 \pm 0.04$ \\
 & $1$     & $0.73 \pm 0.02$ \\
 & $0.001$ & $0.65 \pm 0.04$ \\
\hline
{}{}{$\beta_2$}
 & $0.02$  & $0.87 \pm 0.04$ \\
 & $1$     & $0.82 \pm 0.01$ \\
 & $0.001$ & $0.78 \pm 0.06$ \\
\hline\hline
\end{tabular}
\label{tab:hyperparam_sensitivity}
\end{table}
These results show 
the default setting $(\lambda, \beta_1, \beta_2) = (1, 0.02, 0.02)$ obtains the best results.

\bfsection{Additional real-world experiments}

we expand our real-world evaluation beyond the original dataset and consider two additional person identity classification benchmarks.
Specifically, we use the \textsc{SYSU-MM01} dataset~\cite{wu2017rgb} and the \textsc{RobotPKU} dataset~\cite{liu2017online}.
We only use the RGB modality, since our focus is not on cross-modal person re-identification.
\textsc{SYSU-MM01} contains RGB images of $491$ identities from $6$ cameras, with a total of $30{,}071$ images.
\textsc{RobotPKU} contains more than $16{,}000$ RGB images of $180$ identities, captured under dynamic robotic viewpoints.
These datasets thus provide solid playgrounds for our experiments.

For performance comparison, we follow the same Person Classification protocol as in our main experiment and compare against several state-of-the-art methods, including GTL~\citep{yang2025crossmodalfewshotlearninggenerative}, AGW~\citep{ye2021deep}, TransReID~\citep{he2021transreid}, CLIPReID~\citep{li2023clip}, LDP-net~\cite{zhou2023revisiting}, Style~\cite{fu2023styleadv}, as well as MCRL~\citep{sun2025causal} and IndVAE~\citep{hu2008instrumental}.
Tables~\ref{tab:robotpku} and~\ref{tab:sysu} report the Top-1 classification accuracy (mean $\pm$ std.\ over multiple runs).

\begin{table}[h]
\centering
\caption{Comparison of Top-1 Accuracy on the \textsc{RobotPKU} dataset.}
\setlength{\tabcolsep}{6pt}
\begin{tabular}{l | c }
\hline\hline
Methods & Acc \\
\hline
AGW~\citep{ye2021deep} & $87.6 \pm 0.8$ \\
TransReID~\citep{he2021transreid} & $90.2 \pm 0.9$ \\
CLIPReID~\citep{li2023clip} & $91.7 \pm 1.1$ \\
GTL~\cite{yang2025crossmodalfewshotlearninggenerative} & $93.9 \pm 0.5$ \\
\hline\hline
MCRL~\citep{sun2025causal} & $94.5 \pm 1.0$ \\
IndVAE~\citep{hu2008instrumental} & \underline{$95.8 \pm 0.8$} \\
Ours & {\bf $98.4 \pm 0.5$} \\
\hline
\end{tabular}
\label{tab:robotpku}
\end{table}

\begin{table}[h]
\centering
\caption{Comparison of Top-1 Accuracy on the \textsc{SYSU-MM01} dataset.}
\setlength{\tabcolsep}{6pt}
\begin{tabular}{l | c }
\hline\hline
Methods & Acc \\
\hline
LDP-net~\cite{zhou2023revisiting} & $91.7 \pm 1.1$ \\
Style~\cite{fu2023styleadv} & $92.8 \pm 0.8$ \\
CLIPReID~\citep{li2023clip} & $94.1 \pm 1.0$ \\
GTL~\cite{yang2025crossmodalfewshotlearninggenerative} & $95.7 \pm 0.4$ \\
\hline\hline
MCRL~\citep{sun2025causal} & $96.4 \pm 0.8$ \\
IndVAE~\citep{hu2008instrumental} & \underline{$96.8 \pm 0.5$} \\
Ours & {\bf $99.1 \pm 0.5$} \\
\hline
\end{tabular}
\label{tab:sysu}
\end{table}

To further examine the effect of the architecture choice, we replace the MLP in our framework on the MSTM-17 dataset with a single-layer Gated Recurrent Unit (GRU) \cite{goodfellow2016deep} using the same hidden dimension (the classifier architecture and all training and evaluation protocols remain unchanged, and we set $N=M=32$ for fair comparison).
The resulting Top-1 accuracies are:
GRU: $96.9 \pm 0.4$ versus our original MLP-based model: $96.2 \pm 0.6$.
The GRU improves the classification results slightly against the MLP architecture.
Overall, our method consistently outperforms strong baselines across three real-world person identity datasets.

\subsection{Additional Real-world Experiments for Proposition~\ref{thm:varchange_iden}}

\begin{figure}
\centering
\includegraphics[width=0.2\linewidth]{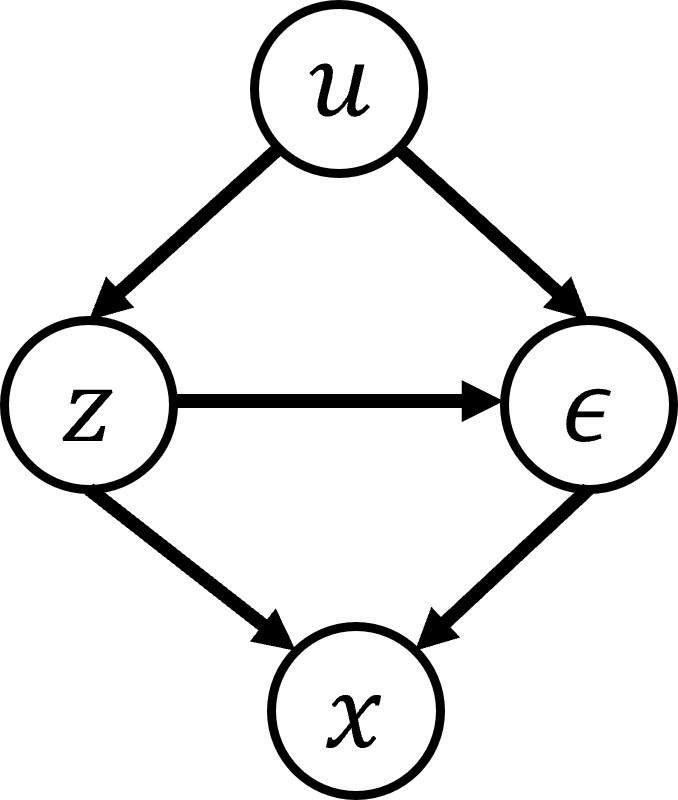}
\caption{
Visualization of the data generations of Eq.~\ref{eq:dgp_continuous_domain}.
}
\label{fig:dp_u}
\end{figure}

\bfsection{Training Objective:}
Figure~\ref{fig:dp_u} visualizes the data-generating process described in Eq.~\ref{eq:dgp_continuous_domain}. Accordingly, the likelihood for this process, given the known auxiliary variable $\rvu$, is expressed as:
\begin{align}\label{eq:likelihood_u}
p(\rvz,\epsilon,\rvx | \rvu) =  p_\theta(\rvx|\rvz,\epsilon)p_\gamma(\epsilon|\rvz,\rvu)p_\delta(\rvz|\rvu)
\end{align}

As a result, we redesign the encoder and prior module to learn the distribution $p_\gamma(\epsilon|\rvz,\rvu)$ and $p_\delta(\rvz|\rvu)$, as shown in Eq.~\ref{eq:likelihood_u}, while keeping the decoder unchanged. 
Accordingly, the ELBO is:
\begin{align}\label{eq:elbo_u}
\mathcal{L}_{\text{ELBO}} = &
\mathbb{E}_{\hat{\rvz} \sim q_{\psi}, \hat{\epsilon} \sim q_{\phi}}\left[\log p_{\theta}(\hat{\rvx}^k|\hat{\rvz},\hat{\epsilon}^k)\right] \nonumber\\
& -\beta_1\mathbb{E}_{\hat{\rvz}\sim q_{\psi}} \big(\log q\left(\hat{\rvz}\vert \rvx \right) - \log p_\delta(\rvz|\rvu)\big)\nonumber \\
& -\beta_2\mathbb{E}_{\hat{\rvz}\sim q_{\psi},\hat{\epsilon} \sim q_{\phi}} \big(\log q\left(\hat{\epsilon}\vert \rvx \right) - \log p_\gamma(\hat{\epsilon}|\hat{\rvz}, \rvu)\big)
\end{align}

We further evaluate our method on two standard domain adaptation
benchmarks, Office-Home \citep{venkateswara2017deep} and VisDA-2017 \citep{peng2018visda}. As shown in
Table~\ref{tab:officehome}, our method achieves the best performance on
all $12$ transfer directions of Office-Home. In particular, it improves
over UniMoS by $4.6$ percentage points on $A\rightarrow C$ and $3.3$
points on $A\rightarrow P$. Averaged over all transfer directions, our
method achieves an accuracy of $80.17\%$, compared with $77.88\%$ for
the strongest competing method, UniMoS.

Table~\ref{tab:visda} shows a similar trend on VisDA-2017. Our method
achieves the best performance on $10$ of the $12$ categories and ties
for the best result on Motorcycle. Notable improvements are observed on
Person ($87.2\%$), Bus ($93.7\%$), and Truck ($67.5\%$). The average
accuracy reaches $90.46\%$, outperforming UniMoS at $88.05\%$. The only
category on which our method does not achieve the best result is Car,
where DAPrompt obtains $77.9\%$ compared with our $76.8\%$.

Overall, the consistent gains across different source--target pairs and
object categories indicate that the learned representation transfers
more effectively across domains. Together with the person-index
classification results, these findings are consistent with our
hypothesis that explicitly separating the latent representation $\rvz$
from the dependent noise $\epsilon$ improves not only in-domain
discrimination, but also representation robustness and transferability.



\begin{table*}[t]
\centering
\caption{Results on Office-Home.}
\label{tab:officehome}
\resizebox{\textwidth}{!}{
\begin{tabular}{lcccccccccccc}
\toprule
Method
& $A\rightarrow C$
& $A\rightarrow P$
& $A\rightarrow R$
& $C\rightarrow A$
& $C\rightarrow P$
& $C\rightarrow R$
& $P\rightarrow A$
& $P\rightarrow C$
& $P\rightarrow R$
& $R\rightarrow A$
& $R\rightarrow C$
& $R\rightarrow P$ \\
\midrule
CLIP \citep{radford2021learning}
& 51.7 & 81.5 & 82.3 & 71.7 & 81.5 & 82.3
& 71.7 & 51.7 & 82.3 & 71.7 & 51.7 & 81.5 \\
DAPrompt \citep{ge2023domain}
& 54.1 & 84.3 & 84.8 & 74.4 & 83.7 & 85.0
& 74.5 & 54.6 & 84.8 & 75.2 & 54.7 & 83.8 \\
ADCLIP \citep{singha2023adclip}
& 55.4 & 85.2 & 85.6 & 76.1 & 85.8 & 86.2
& 76.7 & 56.1 & 85.4 & 76.8 & 56.1 & 85.5 \\
UniMoS \citep{li2024split}
& 59.5 & 89.4 & 86.9 & 75.2 & 89.6 & 86.8
& 75.4 & 58.4 & 87.2 & 76.9 & 59.5 & 89.7 \\
\textbf{Ours}
& \textbf{64.1} & \textbf{92.7} & \textbf{88.5}
& \textbf{79.2} & \textbf{91.0} & \textbf{89.5}
& \textbf{77.9} & \textbf{60.2} & \textbf{89.6}
& \textbf{78.4} & \textbf{59.9} & \textbf{91.0} \\
\bottomrule
\end{tabular}
}
\end{table*}

\begin{table*}[t]
\centering
\caption{Results on VisDA-2017.}
\label{tab:visda}
\resizebox{\textwidth}{!}{
\begin{tabular}{lcccccccccccc}
\toprule
Method
& Plane
& Bicycle
& Bus
& Car
& Horse
& Knife
& Motorcycle
& Person
& Plant
& Skateboard
& Train
& Truck \\
\midrule
CLIP \citep{radford2021learning}
& 98.2 & 83.9 & 90.5 & 73.5 & 97.2 & 84.0
& 95.3 & 65.7 & 79.4 & 89.9 & 91.8 & 63.3 \\
DAPrompt \citep{ge2023domain}
& 97.8 & 83.1 & 88.8 & 77.9 & 97.4 & 91.5
& 94.2 & 79.7 & 88.6 & 89.3 & 92.5 & 62.0 \\
ADCLIP \citep{singha2023adclip}
& 98.1 & 83.6 & 91.2 & 76.6 & 98.1 & 93.4
& 96.0 & 81.4 & 86.4 & 91.5 & 92.1 & 64.2 \\
UniMoS \citep{li2024split}
& 97.7 & 88.2 & 90.1 & 74.6 & 96.8 & 95.8
& 92.4 & 84.1 & 90.8 & 89.0 & 91.8 & 65.3 \\
\textbf{Ours}
& \textbf{98.5} & \textbf{90.1} & \textbf{93.7}
& \textbf{76.8} & \textbf{98.7} & \textbf{97.4}
& \textbf{96.0} & \textbf{87.2} & \textbf{92.0}
& \textbf{93.4} & \textbf{94.2} & \textbf{67.5} \\
\bottomrule
\end{tabular}
}
\end{table*}

\section{Conclusion \& Limitations}

We study identifiability under a misspecified structure where observations may remain dependent given the latent variables and
the noise may itself depend on the latents. We characterize structural misspecification as a perturbed factor analysis problem. Under controlled perturbation and spectral separation, we establish precise subspace identifiability and further obtain component-wise identifiability under structural sparsity. 
Beyond this precise regime, we establish approximate subspace identifiability by bounding the identification error of the recovered representation from its admissible equivalence class. We further develop an unsupervised variational estimator and evaluate the resulting framework on synthetic and real-world tasks.

A limitation of the current theory is that the approximate guarantee is restricted to the subspace level. Extending it to approximate component-wise identifiability requires additional information to understand the component-level ambiguity under perturbation, beyond the conditions used for approximate subspace recovery. Characterizing the minimal structural assumptions or auxiliary information sufficient for such a guarantee is an important direction for future work.

%% file: tabs/tab_impelement.tex
\begin{table}[h]
\small
\caption{The details of our network architectures for the experiment on MSTM-17 dataset, where BS means batch size, $N=32$, $M=32$ and $K=1280$.}
\label{tab:architecture}
\begin{center}
\begin{tabular}{lll}
\multicolumn{1}{c}{Configuration}  
&\multicolumn{1}{c}{Description}
&\multicolumn{1}{c}{Output dimensions}
\\\hline \\
Encoder   $q\psi$    &   & \\
Input: $\rvx$  &  & $\text{BS}\times K$ \\
Dense &  128 neurons, LeakyReLU & $\text{BS}\times 128$\\
Dense &  128 neurons, LeakyReLU & $\text{BS}\times 128$\\
Dense &  Output embeddings & $\text{BS}\times 2N$\\
Bottleneck &  Compute mean and variance of posterior & $\mu_\rvz$, $\sigma_\rvz$\\
Reparameterization &  Sequential sampling & $\hat{\rvz}$\\
\hline \\
Encoder   $q_\phi$    &   & \\
Input: $\rvx$  &  & $\text{BS}\times K$ \\
Dense &  128 neurons, LeakyReLU & $\text{BS}\times 128$\\
Dense &  128 neurons, LeakyReLU & $\text{BS}\times 128$\\
Dense &  Output embeddings & $\text{BS}\times 2M$\\
Bottleneck &  Compute mean and variance of posterior & $\mu_\epsilon$, $\sigma_\epsilon$\\
Reparameterization &  Sequential sampling & $\hat{\epsilon}$\\
\hline \\
Decoder        &  \\
Input: $\hat{\rvz},\hat{\epsilon}$ &   & $\text{BS}\times (N+M)$\\
Dense &  128 neurons, LeakyReLU & $\text{BS}\times  128$\\
Dense &  128 neurons, LeakyReLU & $\text{BS}\times  128$\\
Dense &  input embeddings & $\text{BS}\times  K$\\
\hline\\ 
Prior module      &  \\
Input &  $\hat{\rvz},\hat{\epsilon}$ & $\text{BS}\times  (N+M)$\\
InverseTransformation &  $\hat{\eta}$ & $\text{BS}\times  M$\\
JacobianCompute &  $\log |\text{det}J_{\hat{e}}|$ & $\text{BS}$\\
\hline \\
Classifier       &  \\
Input: $\hat{\rvz}$ &  & $\text{BS}\times  N$\\
Dense &  256 neurons, LeakyReLU & $\text{BS}\times  256$\\
Dense &  256 neurons, LeakyReLU & $\text{BS}\times  256$\\
Dense &  output one-hot embeddings & $\text{BS}\times 4101$\\
\hline
\hline
\end{tabular}
\end{center}
\end{table}

%% file: figs/fig_2step.tex
\begin{wrapfigure}{r}{3cm}
\vspace{-10px}
\includegraphics[width=\linewidth]{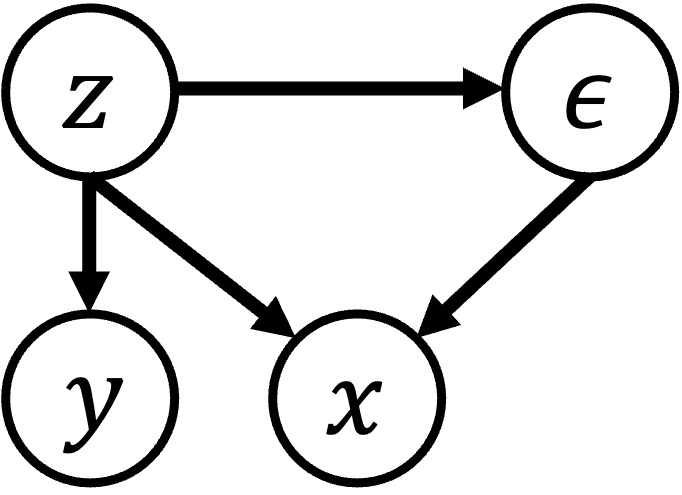}
\caption{
Visualization of the data generating process for Person Classification task
}
\label{fig:dp_pc}
\end{wrapfigure}

%% file: tabs/acc.tex
\begin{table}[h]
\caption{Comparison of Top-1 Accuracy on MSMT17 dataset}


\begin{center}
\begin{tabular}{l | c }
\hline
\hline
\text{Methods} 
&\multicolumn{1}{c}{Acc} \\
\hline
 AGW \citep{ye2021deep} & 85.5 $\pm$ 1.2 \\
 TransReID \citep{he2021transreid} & 87.8 $\pm$ 0.5\\
 CLIPReID \citep{li2023clip} & 90.1 $\pm$ 0.3\\
 GTL \cite{yang2025crossmodalfewshotlearninggenerative} & 91.5 $\pm$ 1.2 \\
 \hline 
 \hline
 MCRL \citep{sun2025causal} & 92.6 $\pm$ 0.9 \\
IndVAE \citep{hu2008instrumental} & \underline{93.1 $\pm$ 0.5}  \\
Ours  & \bf{96.2 $\pm$ 0.6}  \\

\hline
\end{tabular}
\end{center}
\label{table:sota}
\end{table}